	\providecommand\BibTeX{{%
			\normalfont B\kern-0.5em{\scshape i\kern-0.25em b}\kern-0.8em\TeX}}}
\DeclareMathOperator*{\argmin}{arg\,min}
\newtheorem{definition}{Definition}
\newtheorem{theorem}{Theorem}
\newtheorem{lemma}{Lemma}
\newtheorem{assumption}{Assumption}
\begin{document}
%%
%% The "title" command has an optional parameter,
%% allowing the author to define a "short title" to be used in page headers.
\title{Saliency-Regularized Deep Multi-Task Learning}

%%
%% The "author" command and its associated commands are used to define
%% the authors and their affiliations.
%% Of note is the shared affiliation of the first two authors, and the
%% "authornote" and "authornotemark" commands
%% used to denote shared contribution to the research.
\author{Guangji Bai}
\affiliation{%
	\institution{Emory University}
	\streetaddress{Department of Computer Science}
	\city{Atlanta}
	\state{GA}
}
\email{guangji.bai@emory.edu}

\author{Liang Zhao}
\authornote{Corresponding Author}
\affiliation{%
	\institution{Emory University}
	\streetaddress{Department of Computer Science}
	\city{Atlanta}
	\state{GA}
}
\email{liang.zhao@emory.edu}
%%
%% By default, the full list of authors will be used in the page
%% headers. Often, this list is too long, and will overlap
%% other information printed in the page headers. This command allows
%% the author to define a more concise list
%% of authors' names for this purpose.

%%
%% The abstract is a short summary of the work to be presented in the
%% article.
\begin{abstract}
Multi-task learning (MTL) is a framework that enforces multiple learning tasks to share their knowledge to improve their generalization abilities. While shallow multi-task learning can learn task relations, it can only handle pre-defined features. Modern deep multi-task learning can jointly learn latent features and task sharing, but they are obscure in task relation. Also, they pre-define which layers and neurons should share across tasks and cannot learn adaptively. To address these challenges, this paper proposes a new multi-task learning framework that jointly learns latent features and explicit task relations by complementing the strength of existing shallow and deep multitask learning scenarios. Specifically, we propose to model the task relation as the similarity between tasks’ input gradients, with a theoretical analysis of their equivalency. In addition, we innovatively propose a multi-task learning objective that explicitly learns task relations by a new regularizer. Theoretical analysis shows that the generalizability error has been reduced thanks to the proposed regularizer. Extensive experiments on several multi-task learning and image classification benchmarks demonstrate the proposed method’s effectiveness, efficiency as well as reasonableness in the learned task relation patterns. 
\end{abstract}

%%
%% The code below is generated by the tool at http://dl.acm.org/ccs.cfm.
%% Please copy and paste the code instead of the example below.
%%
\begin{CCSXML}
<ccs2012>
<concept>
<concept_id>10010147.10010257.10010321.10010337</concept_id>
<concept_desc>Computing methodologies~Regularization</concept_desc>
<concept_significance>500</concept_significance>
</concept>
<concept>
<concept_id>10010147.10010257.10010258.10010262</concept_id>
<concept_desc>Computing methodologies~Multi-task learning</concept_desc>
<concept_significance>500</concept_significance>
</concept>
</ccs2012>
\end{CCSXML}

\ccsdesc[500]{Computing methodologies~Multi-task learning}
\ccsdesc[500]{Computing methodologies~Regularization}

\keywords{Multi-task Learning, Task Relation, Saliency Detection}

\maketitle
    
\section{Introduction}
\emph{Multi-task learning} (MTL, \cite{caruana1997multitask}) is an important research domain based on the idea that the performance of one task can be improved using related tasks as inductive bias. While traditional shallow MTL methods can fit the models for individual tasks and learn task relations, they do not focus on generating features from scratch and instead rely on pre-defined and explicit features~\cite{zhang2021survey,torres2021sign}. More recently, deep representation learning empowers MTL to go "deep" by equipping it with the capacity to generate features while fitting the tasks' predictive models. Deep MTL is usually categorized according to the ways of correlating tasks' models into two major types: \emph{hard-parameter sharing} and \emph{soft-parameter sharing}. Hard-parameter sharing methods~\cite{zhang2014facial,long2017learning} essentially hard-code which part of neurons or layers to share and which part does not for different tasks instead of doing it adaptively. Moreover, they usually share the layers for representation learning (e.g., convolutional layers) but not those for decision making (e.g., fully-connected layers for classification). On the other hand, soft-parameter sharing methods~\cite{duong2015low,misra2016cross} do not require to hard-code the sharing pattern but instead build individual models for each task and "softly" regularize the relatedness among them. Hence, soft-parameter sharing has better flexibility in learning the task relation, while may not be efficient since its model parameters increase linearly with the number of tasks. Hard-parameter sharing, by contrast, is more "concise" but requires pre-define which parts are shared or not.

Therefore, although MTL is a long-lasting research domain, it remains a highly challenging and open domain that requires significantly more efforts to address challenges such as the trade-off between model flexibility and conciseness of hard- and soft-parameter sharing mentioned above. Although more recently, there have come a few attempts trying to alleviate the dilemma, such as those regularizing task relationships in task-specific layers in hard-parameter sharing to achieve knowledge transfer in unshared layers~\cite{long2017learning} and those adaptively learning which part to share or not by methods like branching~\cite{lu2017fully} or Neural Architecture Search~\cite{sun2019adashare}, the research frontiers still suffer from several critical bottlenecks, including \textbf{(1) Difficulty in regularizing deep non-linear functions of different tasks.} Adaptively learning task relation requires regularizing different tasks' predictive functions, which, however, are much harder to achieve for nonlinear-nonparametric functions since it requires regularizing in the whole continuous domain of input. To work around it, existing works~\cite{long2017learning,strezoski2019learning} typically resort to a \emph{reduced problem} which is to regularize the neural network parameters. Notice that this reduction deviates from the original problem and is over-restricted. For example, first, two neural networks with different permutations of latent neurons can represent the same function. Moreover, even if they have different architectures, they can still possibly represent the same function~\cite{lecun2015deep}. This gap deteriorates the model's generalizability and effectiveness. \textbf{(2) Lack of interpretability in joint feature generation and task relation learning.} Despite incapability of generating features, shallow MTL enjoys good interpretability since they learn explicit task correlations via how the hand-crafted features are utilized. However, in deep MTL, the generated features do not have explicit meaning and how the black-box models relate to each other is highly obscure. It is imperative yet challenging to increase the interpretability of both generated features and task relation. \textbf{(3) Difficulty in theoretical analysis.} While there are fruitful theoretical analyses on shallow MTL, such as on generalization error~\cite{baxter2000model} and conditions for regularized MTL algorithms to satisfy representer theorems~\cite{argyriou2007spectral}, similar analyses meet strong hurdles to be extended to deep MTL due to the difficulty in reasoning about neural networks whose feature space is given by layer-wise embeddings~\cite{wu2020understanding}. It is crucial to enhance the theoretical analyses on the model capacity and theoretical relation among different deep MTL models.

This paper proposes a new \textbf{\underline{S}}aliency-\textbf{\underline{R}}egularized \textbf{\underline{D}}eep \textbf{\underline{M}}ulti-task \textbf{\underline{L}}earning (\textbf{SRDML}) framework to solve the challenges mentioned above. First, we reconsider the feature weights in traditional linear multitask learning as the input gradient and then generalize the feature learning into the non-linear situation by borrowing the notion of saliency detection. Second, we recast the task relation problem as the similarity among saliency regions across tasks so as to regularize and infer the task relation. Third, to validate our hypothesis, we have given a theoretical analysis of their equivalency. Meanwhile, we also provide theoretical analysis on how the proposed regularization helps reduce the generalization error. Finally, we demonstrate our model's effectiveness and efficiency on synthetic and multiple large-scale real-world datasets under comparison with various baselines.

%To this end, we reconsider the feature weights in traditional linear multitask learning as the input gradient and then generalize the feature learning into non-linear situation by borrowing the notion of saliency detection. The saliency provide transparency and interpretability on the latent features and how they determine the prediction. We then recast the task relation problem as the similarity among saliency regions across tasks so as to regularize and infer the task relation. To validate our hypothesis, we have given theoretical analysis on the equivalency. Meanwhile, we also provide theoretical analysis on how the proposed regularization helps reduce the generalizability error.

%Inspired by recent work in saliency-based explanation methods (\cite{zhou2016learning, selvaraju2017grad}), the saliency map with respect to each object class will highlight the region of the correspond object while pay little or no attention to other regions that does not help the task, and this rule could potentially play an important role in multi-task learning in computer vision domain. For example, given an image of human face and two tasks which detects the age and the gender respectively, both tasks will possibly focus on the region of hair because the older usually have less hair than the youngsters and female typically have longer hair than male. In other words, the object hair can be a key input feature for both tasks. (\mytodo{where to put this paragraph})

\section{Related Work}

\noindent\textbf{Multi-task learning (MTL).} Readers may refer to~\cite{zhang2021survey, crawshaw2020multi} for a more comprehensive survey on MTL. Before the popularity of deep learning, traditional MTL usually focuses on hand-crafted features and can be generally divided into two categories: \textbf{1).} \emph{multi-task feature learning}, which aims to learn a shared/similar feature selection, latent space, or model parameters~\cite{argyriou2008convex, evgeniou2007multi}. \textbf{2).} \emph{multi-task relation learning}, which aims to quantify task relatedness via task clustering~\cite{jacob2008clustered, kumar2012learning} or task co-variance~\cite{evgeniou2004regularized, zhang2012convex}. However, they rely on hand-crafted features and the separation of feature generation and task learning may result in sub-optimal performance.

More recently, MTL takes advantage of the advancement of deep neural networks which can directly take raw, complex data such as images, audio, texts, and spatial-temporal data~\cite{zhao2017feature,zhao2018distant,gao2018incomplete,gao2019incomplete} and learn in an end-to-end manner. Deep MTL integrates feature generation and task learning and simultaneously learns both of them~\cite{bengio2013representation}. In this domain, \emph{hard-parameter sharing}~\cite{ouyang2014multi,zhang2014facial} requires to hard-code which part of the network is shared and which is not. Existing work usually shares the lower-level layers for representation learning (e.g., convolutions) while make higher-level layers (e.g., those for classification) separated across tasks. Some existing works extend hard-parameter sharing by considering Neural Architecture Search~\cite{elsken2019neural} like~\cite{lu2017fully}, \cite{sun2019adashare}, and \cite{guo2020learning}. \emph{Soft-parameter sharing} based method has better flexibility where each task has its own models and regularization is used to enforce task relatedness by aligning their model parameters~\cite{duong2015low,misra2016cross}. To achieve both hard-parameter sharing's conciseness and efficiency and soft-parameter sharing's flexibility, some recent work~\cite{long2017learning,strezoski2019learning} shares the representation learning layers while exploits task relations in task-specific layers.

\noindent\textbf{Saliency detection.} Saliency detection is to identify the most important and informative part of input features. It has been applied to various domains including CV~\cite{goferman2011context,gao2022res}, NLP~\cite{li2015visualizing,ren2019generating}, etc. The salience map approach is exemplified by~\cite{zeiler2014visualizing} to test a network with portions of the input occluded to create a map showing which parts of the data actually have an influence on the network output. In~\cite{simonyan2013deep}, a salience map can be created by directly computing the input gradient. Since such derivatives can miss important aspects of the information that flows through a network, a number of other approaches have been designed to propagate quantities other than gradients through the network. In CV domain, Class Activation Mapping (CAM,~\cite{zhou2016learning}) modifies image classification CNN architectures by replacing fully-connected layers with convolutional layers and global average pooling,  thus achieving class-specific feature maps. Grad-CAM~\cite{selvaraju2017grad} generalizes CAM by visualizing the linear combination of the last convolutional layer's feature map activations and label-specific weights, which are calculated by the gradient of prediction score w.r.t the feature map activations. Grad-CAM invokes different versions of backpropagation and/or activation, which results in aesthetically pleasing, heuristic explanations of image saliency. While there exist some other saliency-based methods along this research line including Guided Propagation~\cite{springenberg2014striving}, Deconvolutional Network~\cite{zeiler2014visualizing}, etc, they are designed only for specific architectures like ReLU Network for Guided Propagation.

\begin{figure}[t!]
  \begin{center}
    \includegraphics[width=0.47\textwidth]{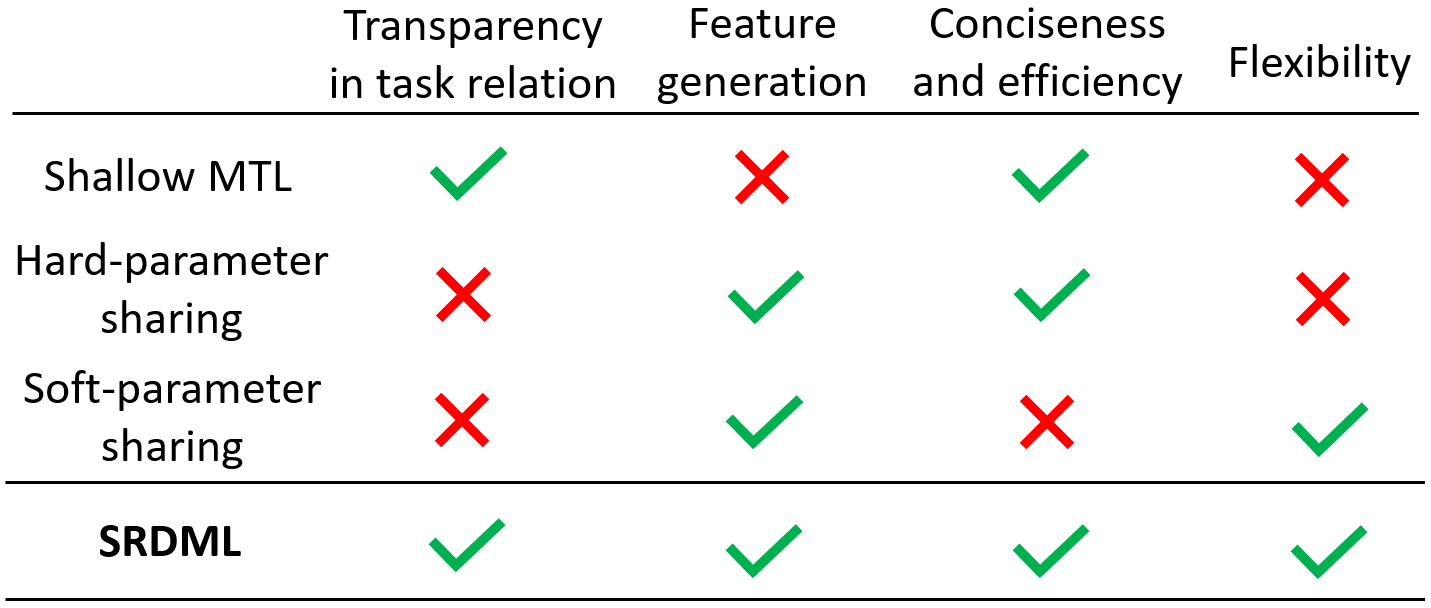}
  \end{center}
  \vspace{-0.2cm}
  \caption{Comparison over different MTL methods.}
  \vspace{-0.2cm}
  \label{fig:pro and con}
  \vspace{-0.2cm}
\end{figure}

\begin{figure*}[t!]
  \includegraphics[width=0.98\textwidth]{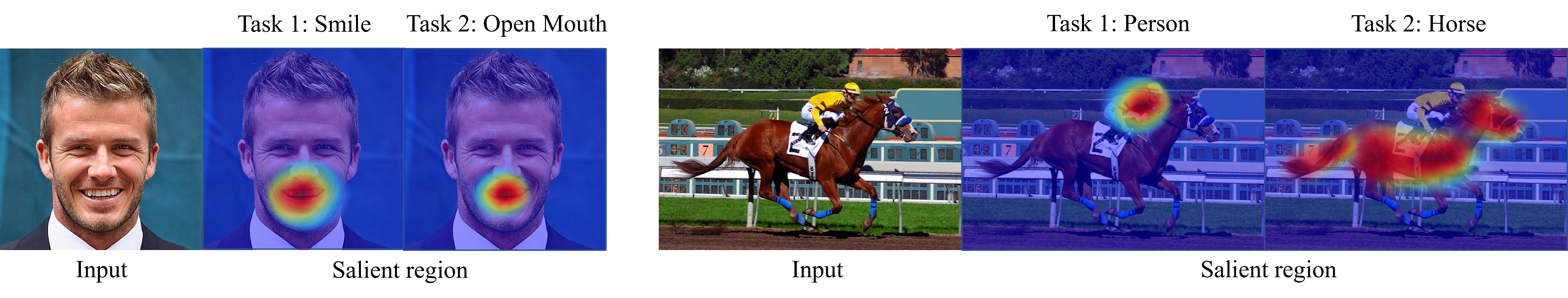}
  \vspace{-2mm}
  \caption{Illustrative examples of relation between saliency and task similarity. Left: Two tasks are to detect whether the man is smiling and his mouth is open. The salient regions for two tasks are both around the mouth. Right: Two tasks are to detect the horse and person. The salient regions are close to each other, indicating the potential similarity between the tasks.}
  \label{fig:saliency and task similarity}
\vspace{-0.2cm}
\end{figure*}

\section{Proposed Method}
In this section, we introduce our proposed Saliency-regularized Deep Multi-task Learning (SRDML) method. We first review the pros and cons for each MTL method and describe our main motivation, then formally introduce our model and its objective function.

\subsection{Problem Formulation}
Consider a multi-task learning problem with $T$ tasks such that a dataset $\{\mathbf{X}, \mathbf{Y}_1, \mathbf{Y}_2, \cdots, \mathbf{Y}_T\}$ is given with i.i.d training samples $\mathbf{X}_t = \{\mathbf{x}^{(t)}_1,\mathbf{x}^{(t)}_2$, $\cdots,\mathbf{x}^{(t)}_n\}$, $\mathbf{Y}_t = \{\mathbf{y}^{(t)}_1,\mathbf{y}^{(t)}_2,\cdots,\mathbf{y}^{(t)}_n\}$, where $n$ is the sample size and $(\mathbf{x}^{(t)}_i,\mathbf{y}^{(t)}_i)$ is a pair of input and label such that $\mathbf{x}^{(t)}_i\in\mathcal{X}$ and $\mathbf{y}^{(t)}_i\in\mathbb{R}$, $\forall\; i = 1, 2, \cdots, n$ and $t=1,2,\cdots,T$. 

%$(\mathbf{x}_i,\mathbf{y}^{(t)}_i)$ is a pair of input and label such that $\mathbf{x}^{(t)}_i\in\mathcal{X}$ and $\mathbf{y}^{(t)}_i\in\mathbb{R}$, $\forall\; i = 1, 2, \cdots, n$. 

Given a predictor $g$ which factorizes as $g = f \circ h$, where "$\circ$" stands for functional composition. The function $h: \mathcal{X}\rightarrow \mathbb{R}^{K}$ is called the feature or representation extraction part and is shared for all tasks, while $f: \mathbb{R}^{K}\rightarrow \mathbb{R}$ is a function defined on $\mathbb{R}^{K}$, a predictor specialized to each task at hand. $K$ denotes the latent representation or feature-map dimensions. We further assume that each task shares the same input feature $\mathbf{x}$, i.e., $\mathbf{x}^{(1)}=\mathbf{x}^{(1)}=\cdots=\mathbf{x}^{(T)}$, which is very commonly seen in deep MTL problems such as multi-task image classification task in the Computer Vision domain.

Our \emph{goal} is to build a deep architecture for learning multiple tasks $\mathbf{y}^{(t)}_i = g_{t}(\mathbf{x}_i),\;t = 1, 2, \dots ,T$ which jointly generates semantic features and learns task relation to correlate different tasks with interpretability. This goal poses significant challenges to existing work: \textbf{1).} Directly regularizing the prediction function of different tasks is extremely hard. Existing work considered a reduced problem by regularizing the feature weights of different $f_t$ which is over-restricted. \textbf{2).} How to learn interpretable task relations with deep/implicit features is still unclear. \textbf{3).} Theoretical analysis is rare in deep MTL due to the non-linear and non-parametric functions of $h$ and $f$. To jointly solve these challenges, we reconsider the feature weights in shallow MTL as \emph{input gradient}, i.e., $\partial f(x)/\partial x,\; x\in \mathbb{R}^K$, and generalize the feature learning into the deep network by considering the saliency detection methods.

\subsection{Motivations}
We propose a simple framework that can innovatively achieve all the goals, as shown in Figure~\ref{fig:pro and con}.

To achieve model conciseness and efficiency as well as task relatedness flexibility, we share the representation learning layers and learn task relationships in task-specific layers. This is based on essential neuro-inspirations: human sensory organs and retina are the same for all different tasks (meaning the convolution layers are shared). On the other hand, the working memory will leverage the long-term memory for each task, and related tasks will have related memory (i.e., model), and their relatedness can be considered as the similarities of activation patterns for different tasks, namely the similarity among the saliency maps for different tasks.

Then, the next question is how to regularize the relation among different tasks, namely, how to regularize the (dis)similarity of the predictive functions of different tasks. As mentioned above, it is problematic to directly regularize the neuron network parameters due to their gap with the actual function. For example, neural networks with different architectures or neuron permutations could represent the same function. Therefore, this motivates us to explore an innovative alternative so that we can more easily work towards the space of \emph{functional}. Specifically, we propose to regularize first-order derivatives with respect to the input of different tasks. This new strategy has two crucial merits: First, it is mathematically equivalent to directly regularizing the function without the gap in existing works mentioned above. Second, it also finds inspiration from the saliency map domain and comes with strong interpretability in how tasks correlate.

\noindent\textbf{Key Merit 1: Regularizing task functions without theoretical gap.} Specifically, Theorem~\ref{thm:model assumption correctness} below tells us that enforcing multiple tasks to have similar input gradients is equivalent to encouraging those tasks themselves to be similar. 
\begin{theorem}
\label{thm:model assumption correctness}
Define $\mathcal{F}\vcentcolon=\{f\in \textbf{C}^1: f(0)=0\}$, where $\textbf{C}^k$ is the family of functions with $k^{th}$-order continuous derivatives for any non-negative integer $k$. Given $f_1,f_2\in \mathcal{F}$, we have:
%\vspace{-0.1cm}
\begin{equation}
    f_1 = f_2 \quad {\textbf{if and only if}}\quad f_1^{\prime}(x) = f_2^{\prime}(x),\; \forall x\in\mathcal{X}
\end{equation}
\end{theorem}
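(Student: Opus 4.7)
The statement is the classical fact that a $C^1$ function is determined up to an additive constant by its derivative, combined with the normalization $f(0)=0$ to pin that constant down. The plan is to split into the two implications and handle each separately; the forward direction is essentially a definitional triviality, while the reverse direction carries the actual content and relies on the fundamental theorem of calculus (or the mean value theorem) together with the boundary condition $f(0)=0$.

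For the forward implication ($\Rightarrow$), I would simply observe that if $f_1$ and $f_2$ agree as functions on $\mathcal{X}$, then they have identical difference quotients at every point, and hence identical derivatives wherever they exist. Since both lie in $\mathbf{C}^1$, the derivatives exist everywhere and coincide, so $f_1'(x)=f_2'(x)$ for all $x\in\mathcal{X}$.

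For the reverse implication ($\Leftarrow$), the main step is to define $\phi \vcentcolon= f_1 - f_2$ and note that $\phi \in \mathcal{F}$, since $\mathbf{C}^1$ is closed under subtraction and $\phi(0)=f_1(0)-f_2(0)=0$. The hypothesis gives $\phi'(x)=0$ for every $x\in\mathcal{X}$. In the scalar case this immediately implies that $\phi$ is constant by the mean value theorem, and the constant is pinned to $0$ by $\phi(0)=0$. In the multivariate setting relevant here (where $f_i:\mathbb{R}^K\to\mathbb{R}$ and $\phi'=\nabla\phi$), I would integrate along the line segment from $0$ to an arbitrary $x$, writing
\begin{equation}
\phi(x) \;=\; \phi(0) + \int_0^1 \nabla\phi(tx)\cdot x \,\mathrm{d}t \;=\; 0,
\end{equation}
where the first equality is the fundamental theorem of calculus applied to $t\mapsto\phi(tx)$ and the second uses $\nabla\phi\equiv 0$ together with $\phi(0)=0$. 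This gives $f_1(x)=f_2(x)$ for every $x\in\mathcal{X}$.

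The only genuine subtlety, and what I would flag as the main obstacle, is a mild topological assumption on $\mathcal{X}$: the line-integration argument requires that the segment $\{tx : t\in[0,1]\}$ lies in $\mathcal{X}$ for each $x\in\mathcal{X}$, i.e.\ star-shapedness with respect to $0$. Under the implicit assumption of the paper that $\mathcal{X}$ is a (convex) subset of $\mathbb{R}^K$ containing $0$, or more generally path-connected with $0\in\mathcal{X}$, this is automatic; otherwise the conclusion would have to be weakened to "$\phi$ is locally constant on each connected component of $\mathcal{X}$," and one would need extra structure to rule out different constants on different components. Everything else in the argument is standard once this geometric hypothesis is made explicit.
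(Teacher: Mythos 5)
Your proof is correct, and it establishes the same core chain as the paper: equal derivatives force the difference $f_1-f_2$ to be constant, and the normalization $f(0)=0$ kills the constant. The execution differs slightly. The paper works coordinate-wise: from $\partial f_1/\partial x_k=\partial f_2/\partial x_k$ it writes $f_1=f_2+c_k$ with $c_k$ a function of the remaining variables, then argues each $c_k$ has vanishing partials and that the $c_k$ all coincide, yielding a single constant $c$, which the boundary condition sets to zero. You instead form $\phi=f_1-f_2$ and apply the fundamental theorem of calculus to $t\mapsto\phi(tx)$ along the segment from $0$ to $x$, giving $\phi(x)=\phi(0)+\int_0^1\nabla\phi(tx)\cdot x\,\mathrm{d}t=0$ in one step. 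Your route is a bit cleaner, and its main added value is that you make explicit the geometric hypothesis both arguments need: the segment (or at least a connecting path) from $0$ to $x$ must lie in $\mathcal{X}$, i.e.\ $\mathcal{X}$ should be star-shaped about $0$ or at least connected with $0\in\mathcal{X}$; on a general open set one only gets that $\phi$ is locally constant on components. The paper's coordinate-wise integration silently requires the same kind of assumption (it assumes only that $\mathcal{X}\subseteq\mathbb{R}^K$ is open), so flagging it is a genuine, if minor, improvement rather than a gap in your argument.
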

%\vspace{-0.3cm}
\begin{proof}
Please refer to the appendix for the formal proof.
\end{proof}

Our analysis above allows us to regularize the prediction functions of different tasks in the \emph{functional} space instead of parameter space. The assumption over function family $\mathcal{F}\vcentcolon=\{f\in \textbf{C}^1: f(0)=0\}$ is reasonable in practice since an all-zero input $x$ simply corresponds to a "black" picture, and for any tasks we assume a black picture contains no useful information and should be classified as the negative sample (i.e., ground-truth label should be 0).

\noindent\textbf{Key Merit 2: Inspiration from saliency map and enhancement of interpretability.} Evaluating task relation with derivative similarity has justification from a saliency perspective. Saliency is a derivative of the prediction score w.r.t. input features, and it denotes the semantic features that influence the prediction most. In addition, similar tasks tend to have similar saliency, while dissimilar tasks tend to have dissimilar saliency. As shown in Figure~\ref{fig:saliency and task similarity}, we enforce higher-level semantic features as saliency.

Many previous work have asserted that deeper representations in a CNN capture higher-level visual constructs~\cite{bengio2013representation}. Furthermore, convolutional layers naturally retain spatial information which is lost in fully connected layers, so we expect the last convolutional layers to have the best compromise between high-level semantics and detailed spatial information. By following a recent work called Grad-CAM~\cite{selvaraju2017grad}, we use the gradient information flowing into the last convolutional layer of the CNN to capture the saliency map to each neuron for a particular task or class of interests.

\subsection{Objective Function}
We first give a formal definition of saliency. For example, in computer vision, given an input image $I$, a classification ConvNet $f$ predicts $I$ belongs to class $c$ and produces the class score $f_{c}(I)$ (\emph{abbrev.} $f_c$). Let $A$ be the feature map activations of the last convolutional layer. We are curious about the rank of each pixel in $A$ based on their importance, which is referred to as saliency. The relationship between $f_c$ and $A$ is highly non-linear due to the non-linearity in $f$. In this case, we use the first-order derivatives i.e., $\partial f_c / \partial A$ to approximate the saliency map, which basically reflects the contributions of different pixels in $A$ to the prediction $f_c$. 

The objective function of SRDML is defined as follow:
\begin{equation}
\begin{split}
  &\quad\quad\min_{h,f_1,\cdots,f_T,\xi} \sum\nolimits_{t=1}^T\mathcal{L}_{t}(f_t(h(\mathbf{X})),\mathbf{Y}_{t}),\; \text{s.t.\;} \\ 
  &\forall\; i,j,\; {dist(\nabla_{A} f_i, \nabla_{A} f_j) \leq \xi_{ij}\;}, \;\sum\nolimits_{1\leq i < j \leq T}\;\xi_{ij}\leq \alpha
\end{split}
\label{eq:objective funciton with constraint with relaxation}
\end{equation}
where $i$, $j$ are task indexes with $1\leq i < j \leq T$, $A = h(\mathbf{X})$ is the feature map activations from the last convolutional layer of $h$, and $\nabla_{A}f_t$ is the first-order derivative of function $f_t$ with respect to $A$, i.e., $\partial f_t / \partial A$. The $dist(\cdot)$ function here can be any distance measure including commonly-used ones like $\ell_1$, $\ell_2$, etc, and any potential normalization on the input gradient can also be embeded in $dist(\cdot)$.

\begin{figure}[t!]
  \begin{center}
    \includegraphics[width=0.47\textwidth]{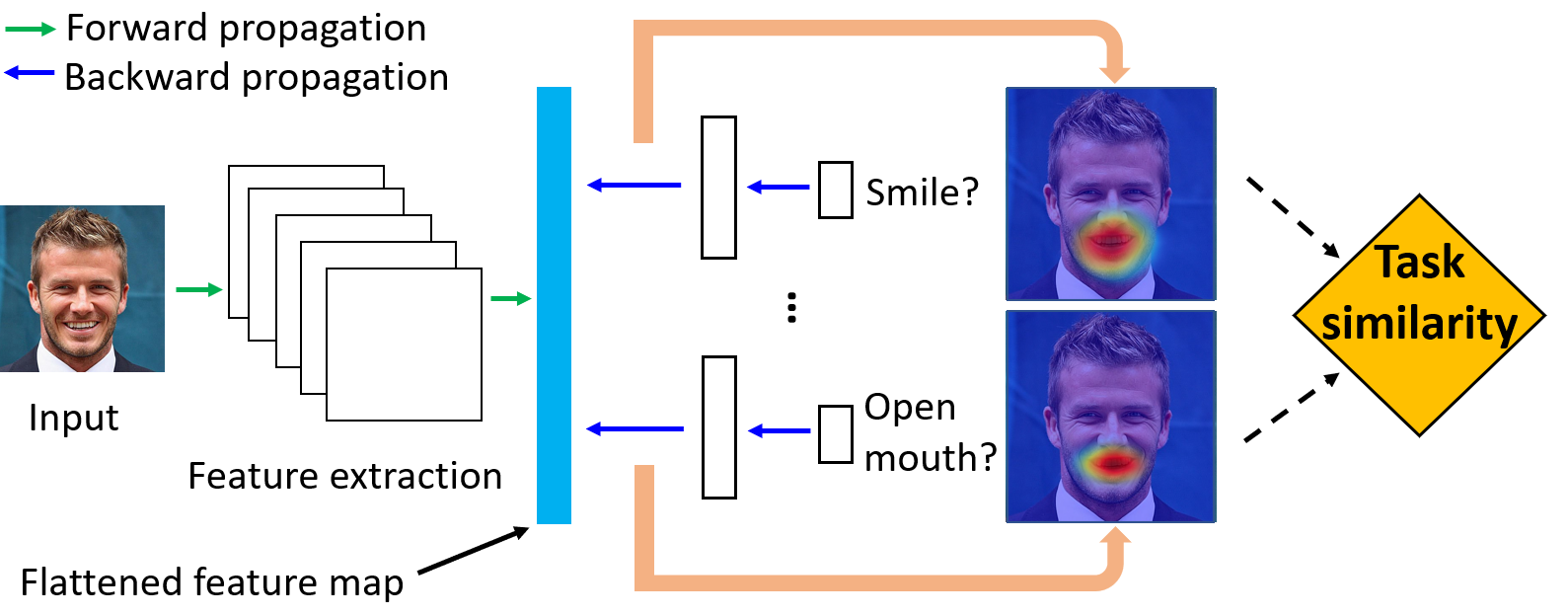}
  \end{center}
  \vspace{-0.2cm}
  \caption{A high level overview of SRDML architecture.}
  \label{fig:SRDML architecture}
  \vspace{-0.3cm}
\end{figure}

To adaptively learn the task relations, we introduce $\{\xi_{ij}\}_{1\leq i < j \leq T}$, which is a set of learnable \emph{slack} variables for each pair of tasks and $\alpha$ is a hyperparameter for controlling the overall level of slacking. Notice each $\xi_{ij}$ can only take non-negative value and this is guaranteed by the inequality constraint and the non-negative norm.

Directly optimizing Eq.~\ref{eq:objective funciton with constraint with relaxation} could be difficult due to the constraint. By utilizing Lagrangian method, we further transform Eq.~\ref{eq:objective funciton with constraint with relaxation} into a regularized form as follow: 
\vspace{-0.1cm}
\begin{equation}
\begin{split}
  &\min_{h,f_1,\cdots,f_T,\omega} \sum\nolimits_{t=1}^T \mathcal{L}_t(f_t(h(\mathbf{X})),\mathbf{Y}_{t}) \\ & \quad\quad\quad\quad\; + \lambda\cdot\sum\nolimits_{1\leq i < j \leq T}\; \omega_{ij}\cdot dist(\nabla_{A} f_i, \nabla_{A} f_j) \\
  &\text{s.t.,  } {\forall\; i,j,\;\omega_{ij}\geq 0\; \text{and}\;\; \sum\nolimits_{1\leq i < j \leq T}\;\omega_{ij}\geq \beta}  
\end{split}
\label{eq:objective funciton with regularizer}
\end{equation}
where $\{\omega_{ij}\}_{1\leq i < j \leq T}$ is a set of learnable parameters to explicitly model task relationship during the multi-task training, and $\lambda$ is the regularization coefficient. Our Eq.~\ref{eq:objective funciton with regularizer} is motivated by the \emph{graph regularization}~\cite{evgeniou2004regularized,evgeniou2005learning}, where each node corresponds to a specific task and $\omega_{ij}$ represents the weight for the edge between task $i$ and task $j$, so a graph-structure task relationship can be adaptively learned by SRDML. We rearrange the non-negative constraints over $\omega$ and apply normalization onto $\{\omega_{ij}\}_{1\leq i < j \leq T}$ to further simplify the constraints as follow:
\begin{equation}
\begin{split}
  &\min_{h,f_1,\cdots,f_T,\omega \succ 0} \sum\nolimits_{t=1}^{T} \mathcal{L}_t(f_t(h(\mathbf{X})),\mathbf{Y}) \\ &\quad\quad\quad\quad\quad\; + \lambda\cdot\sum\nolimits_{1\leq i < j \leq T} \frac{\omega_{ij}}{W}\cdot dist(\nabla_{A} f_i, \nabla_{A} f_j)
\end{split}
\label{eq:objective funciton with normalization}
\end{equation}
\vspace{0.2cm}
where $W = {\textstyle\sum}_{1\leq i < j \leq T}\;\omega_{ij}$. Thanks to our normalization trick, the overall objective of SRDML is differentiable and can be trained in an end-to-end manner. We use standard gradient descent (e.g., Adam~\cite{kingma2014adam}) to solve Eq.~\ref{eq:objective funciton with normalization}, where we aim to learn multiple tasks and the task relationship simultaneously. Although the normalization trick introduced in Eq.~\ref{eq:objective funciton with normalization} no longer guarantees that the hard constraint of the lower bound of all $\omega_{ij}$ can be strictly satisfied, our empirical results show that our normalization trick works well in practice and SRDML can capture reasonable task relationship by optimizing Eq.~\ref{eq:objective funciton with normalization} with finetuned hyperparameters. 

A general overview of SRDML architecture can be found in Figure~\ref{fig:SRDML architecture}. First, the input image is fed into a \emph{shared} feature extractor, which is implemented by a sequence of convolutional layers. Right after the feature extraction process, we obtain a set of flattened feature maps (shown as the blue bar in Figure~\ref{fig:SRDML architecture}), which contains high-level semantic information with respect to the original image~\cite{selvaraju2017grad}. On top of the feature map, each task-specific head will first calculate the saliency map with respect to its own prediction. Based on the saliency map for all the tasks, the task similarity can be calculated via some distance measure. Note that our overall framework is differentiable and can be trained in an end-to-end manner.   

Last, how to share the convolutional layers is \emph{orthogonal} to the focus of our paper because our SRDML focuses on task-specific layers instead of representation learning layers. This also implies whichever the best choice for convolutional layer sharing strategy can be utilized to work for our model. Our empirical results also demonstrated the reasonableness of the sharing policy that we used in this paper.

% \subsection{Discussion}
% Most existing works in deep MTL consider the sharing of all hidden representation learning layers like [1,3,4]. The recent research topic which aims at learning how to share the convolutional layers (e.g., [5,6,7]) is still an open problem and has some potential drawbacks including 1) Extra computational cost in learning the parameters for the sharing policy. 2) Extra parameters will increase the overall model complexity and may require more data samples to train. 3) Due to the flexibility and complexity in architecture learning, the theoretical guarantee for this line of works is limited and harder to pursue. How to share the convolutional layers is orthogonal to the focus of our paper because our SRDML focuses on task-specific layers instead of representation learning layers. This also means whichever the best choice for convolutional layer sharing strategy can be utilized to work for our model. Empirical results also demonstrated the reasonableness of the sharing policy that we used in this paper. Specifically, SRDML achieved superior performance over baselines (Tables 1 and 2) as well as reasonable task relations (Figures 4 and 7) on various widely used benchmarks.

\section{Theoretical Analysis}

In this section, we present the theoretical analyses of our SRDML model. First, we prove that our proposed regularizer can help reduce the generalization error. Second, we formally analyze the relation between SRDML and other MTL methods. We put all formal proofs in the appendix due to the limited space.

\subsection{Generalization Error Bound}
Here we show the generalization bound of our model. Our main contribution here is we proved that \textbf{our proposed regularization term can help reduce the generalization error.} 

% Recall Eq.~\ref{eq:objective funciton with constraint with relaxation}, where SRDML solves the following constrained problem: \textcolor{red}{consider remove Eq.5}
% %\vspace{-0.1cm}
% \begin{equation}
% \begin{split}
%   &\;\min_{h,f_1,\cdots,f_T,\xi} \frac{1}{nT} \sum\nolimits_{t=1}^T\sum\nolimits_{i=1}^n \mathcal{L}_{t}(f_t(h(\mathbf{x}_i)),\mathbf{y}^{(t)}_i), \;\text{s.t.\;} \\ 
%   &\forall\; i,j,\; {dist(\nabla_{A} f_i, \nabla_{A} f_j) \leq \xi_{ij}\;}, \;\sum\nolimits_{1\leq i < j \leq T}\;\xi_{ij}\leq \alpha
% \end{split}
% \label{eq:objective funciton GEB 1}
% %\vspace{-0.1cm}
% \end{equation}
% %\vspace{-0.1cm}
For simpler notation, define
\begin{equation}
\begin{split}
    \mathcal{F}_{\epsilon(\alpha)} \vcentcolon= &\big\{\textbf{f} \in \mathcal{F}^{T}:\; \forall\; 1\leq i < j \leq T,\; x\in \mathcal{X},\\
    &\quad dist(\nabla_{x} f_i, \nabla_{x} f_j) \leq \epsilon_{ij},\; \sum\nolimits_{1\leq i < j \leq T}\;\epsilon_{ij} \leq \alpha\big\}
\end{split}
\label{eq:F def}
\end{equation}
where $\textbf{f}=(f_1,f_2,\cdots,f_T)$ is the vectorization of each task's function, and $\{\epsilon_{ij}\}_{1\leq i < j \leq T}$ is a set of global slack variables. Hence, the optimization problem of Eq.~\ref{eq:objective funciton with constraint with relaxation} can be simplified as
\begin{equation}
%\vspace{-0.1cm}
\begin{split}
  &\min_{h\in\mathcal{H},\textbf{f}\in\mathcal{F}_{\epsilon(\alpha)}} \frac{1}{nT} \sum\nolimits_{t=1}^T\sum\nolimits_{i=1}^n \mathcal{L}_{t}(f_t(h(\mathbf{x}_i)),\mathbf{y}^{(t)}_i) 
\end{split}
\label{eq:objective funciton 3}
\end{equation}

Before introducing the theorem, we make the following standard assumptions over the loss function:
\begin{assumption}[\cite{maurer2016benefit}]
\label{ass:obj_function}
The loss function $\mathcal{L}$ has values in $[0,1]$ and has Lipschitz constant 1 in the first argument, i.e.: 
\begin{enumerate*}
    \item $\mathcal{L}(y,y^{\prime}) \in [0,1]$ 
    \item $\mathcal{L}(y,y^{\prime}) \leq y, \;\forall\;{y^{\prime}}$.
\end{enumerate*}
\end{assumption}
\noindent Different Lipschitz constants can be absorbed in the scaling of the predictors and different ranges than $[0, 1]$ can be handled by a simple scaling of our results.

\begin{definition}[Expected risk, Empirical risk]
Given any set of function $h,f_1,\cdots,f_T$, we denote the expected risk as:
\vspace{-0.1cm}
\begin{equation}
\begin{split}
\mathcal{E}(h,f_1,\cdots,f_T) \coloneqq \frac{1}{T} \sum\nolimits_{t=1}^T\mathbb{E}_{(X,Y)\sim{\mu_t}}[\mathcal{L}_{t}(f_t(h(X)),Y)]
\end{split}
\label{eq:expected risk}
\end{equation}
Given the data $\textbf{Z}=(\textbf{X},\textbf{Y})$, the empirical risk is defined as:  
\vspace{-0.1cm}
\begin{equation}
\begin{split}
\mathcal{\hat{E}}(h,f_1,\cdots,f_T|\textbf{Z}) \coloneqq \frac{1}{T} \sum\nolimits_{t=1}^T\frac{1}{n}\sum\nolimits_{i=1}^n\mathcal{L}_{t}(f_t(h(\mathbf{x}_i)),\mathbf{y}^{(t)}_i)
\end{split}
\label{eq:empirical risk}
\end{equation}
\end{definition}

\begin{definition}[Global optimal solution, Optimized solution]
Denote $(h^*,\textbf{f}^*)$ as the global optimal solution of the expected risk: 
\begin{equation}
    (h^*,\textbf{f}^*) \coloneqq \argmin_{h\in\mathcal{H}, \textbf{f}\in \mathcal{F}_{\epsilon(\alpha)}}\mathcal{E}(h,f_1,\cdots,f_T)
\end{equation}
and $(\hat{h}, \hat{\textbf{f}})$ as the optimized solution by minimizing the empirical risk:
\begin{equation}
    (\hat{h}, \hat{\textbf{f}}) \coloneqq \argmin_{h\in\mathcal{H}, \textbf{f}\in \mathcal{F}_{\epsilon(\alpha)}} \mathcal{\hat{E}}(h,f_1,\cdots,f_T|\textbf{Z})
\end{equation}
\end{definition}

The following theorem provides theoretical guarantee of our proposed method's generalizability.

\begin{theorem}[Generalization Error]
\label{thm:error bound}
Let $\delta>0$ and $\mu_1,\mu_2,\dots,\mu_T$ be the probability measure on $\mathcal{X} \times \mathbb{R}$. With probability of at least $1-\delta$ in the draw of $\mathbf{Z}=(\mathbf{X},\mathbf{Y})\sim\prod_{t=1}^{T}{\mu_{t}^{n}}$, we have:
\begin{equation}
\begin{aligned}
&\mathcal{E}(\hat{h}, \hat{\textbf{f}}) - \mathcal{E}(h^*,\textbf{f}^*) \leq c_1 L\frac{G(\mathcal{H}(\textbf{X}))}{nT} \\ 
&\quad\quad + c_{2} B\frac{\sqrt{\lambda_{min}^{-1}}\sup_h \left\lVert h(\textbf{X}) \right\rVert}{n\sqrt{nT}} + \sqrt{\frac{8\ln{(4/\delta})}{nT}}
\end{aligned}
\label{eq:generalizatio error bound}
\end{equation}
\end{theorem}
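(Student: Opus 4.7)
The plan is to follow the standard empirical-process template for multi-task representation learning (in the spirit of~\cite{maurer2016benefit}), with the main novelty concentrated in how the input-gradient regularizer converts into a complexity bound. First, I would reduce the excess risk to a uniform deviation $\mathcal{E}(\hat{h},\hat{\textbf{f}}) - \mathcal{E}(h^*,\textbf{f}^*) \le 2 \sup_{h,\textbf{f}} \lvert \mathcal{\hat{E}}(h,\textbf{f}\mid \textbf{Z}) - \mathcal{E}(h,\textbf{f})\rvert$, with the supremum taken over $h\in\mathcal{H}$ and $\textbf{f}\in\mathcal{F}_{\epsilon(\alpha)}$. By Assumption~\ref{ass:obj_function} each loss value lies in $[0,1]$, so changing any single training sample perturbs the double average by at most $1/(nT)$. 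McDiarmid's bounded-difference inequality, combined with a two-sided union bound, then controls this supremum by its expectation plus a tail of order $\sqrt{\ln(4/\delta)/(nT)}$, which, after bookkeeping constants through the factor of $2$ from symmetrization, yields the third summand $\sqrt{8\ln(4/\delta)/(nT)}$.

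Next, I would symmetrize and pass to Gaussian complexity, then apply the Ledoux--Talagrand contraction principle using the $L$-Lipschitz property of the loss from Assumption~\ref{ass:obj_function} to strip off $\mathcal{L}_t$. What remains is the Gaussian complexity of the composed predictor class $\{\mathbf{x}\mapsto f_t(h(\mathbf{x}))\}$ over $h\in\mathcal{H}$ and $\textbf{f}\in\mathcal{F}_{\epsilon(\alpha)}$. I would split this as a sum of a representation term and a task-specific term. The representation term reflects the cost of learning the shared extractor $h$ on the pooled $nT$ samples and evaluates to $G(\mathcal{H}(\textbf{X}))$; normalizing by the total sample size gives the first summand $c_1 L\, G(\mathcal{H}(\textbf{X}))/(nT)$.

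The main obstacle is the task-specific term, namely the Gaussian complexity of $\mathcal{F}_{\epsilon(\alpha)}$ conditioned on a fixed feature matrix $A=h(\textbf{X})$. The key is to convert the pairwise constraint $\sum_{i<j} dist(\nabla_A f_i,\nabla_A f_j) \le \alpha$ into a low-dimensional structure: it forces the $T$ gradient vectors to cluster around a common centroid, so the tasks effectively live in a low-rank affine subspace of the full predictor space. Using the normalization $f_t(0)=0$ from Theorem~\ref{thm:model assumption correctness}, I would expand each $f_t$ as a linear functional of $A$ plus a Taylor residual, decompose the linear part into a shared direction and task-specific perturbations whose collective norm is bounded by $\alpha$, and then apply a standard Gaussian-complexity estimate for bounded linear classes after whitening by the empirical feature Gram matrix $A^\top A/n$. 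The whitening step introduces $\sqrt{\lambda_{\min}^{-1}}$, the boundedness of $\textbf{f}$ supplies the factor $B$, and $\sup_h\lVert h(\textbf{X})\rVert$ captures the diameter of the feature set; averaging across the $T$ tasks and combining with the $1/\sqrt{n}$ symmetrization rate gives the $1/(n\sqrt{nT})$ scaling of the middle term. Extracting the sharp $\sqrt{\lambda_{\min}^{-1}}$ dependence from the pairwise gradient constraint is the most delicate step, and the natural route I would take is a change of coordinates that orthogonalizes the shared direction from the task-specific residuals, followed by a covering-number argument on the residual subspace whose radius is dictated by $\alpha$.
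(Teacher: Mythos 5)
Your overall template is the right one and matches the route the paper (implicitly) follows: the reduction to a uniform deviation, McDiarmid for the $\sqrt{8\ln(4/\delta)/(nT)}$ tail, Gaussian symmetrization and contraction to strip the Lipschitz loss, and the split into a shared-representation term $c_1 L\,G(\mathcal{H}(\mathbf{X}))/(nT)$ and a task-specific term is exactly the framework of \cite{maurer2016benefit} that the bound is modeled on. Be aware, though, that the paper's appendix only supplies scaffolding for this theorem --- the Gaussian-average definitions, the trace identity of Lemma~\ref{lem:trace}, and Lemma~\ref{lem:differntial and gradient} --- and defers the formal argument to an external link, so the comparison below is against that inferred route.

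The genuine gap is in your middle term, which is where the paper's actual contribution sits. In Theorem~\ref{thm:error bound}, $\lambda_{min}$ is the smallest non-zero eigenvalue of the graph Laplacian of the \emph{task-relation graph} on $T$ vertices with edge weights $\omega_{ij}$, and $B$ is any bound on the Laplacian quadratic form $\sum_{i,j}\omega_{ij}\,dist^{2}(\nabla_{A}f_i,\nabla_{A}f_j)\le B^{2}$, i.e.\ on the value of the regularizer itself. Your plan instead attributes $\sqrt{\lambda_{min}^{-1}}$ to whitening by the empirical feature Gram matrix $A^{\top}A/n$ and attributes $B$ to ``boundedness of $\mathbf{f}$''; with those identifications you would prove a bound with different constants in which the advertised message --- smaller regularizer value implies smaller $B$ implies smaller bound --- disappears, so the stated inequality is not established. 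The intended mechanism is the standard graph-regularization spectral inequality: the component of the stacked task predictors (equivalently their gradients) orthogonal to the consensus direction has squared norm at most $\lambda_{min}^{-1}$ times the Laplacian quadratic form, hence the task-specific class has effective radius $B\sqrt{\lambda_{min}^{-1}}$; Lemma~\ref{lem:trace} is what rewrites the $\omega_{ij}$-weighted pairwise sums as that Laplacian form (your covering argument with radius dictated by $\alpha$ never uses the weights $\omega_{ij}$ at all). Moreover, your first-order expansion of each $f_t$ as a linear functional of $A$ plus a Taylor residual is uncontrolled: nothing in $\mathcal{F}_{\epsilon(\alpha)}$ bounds the remainder for general $C^{1}$ functions. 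The paper's bridging device is Lemma~\ref{lem:differntial and gradient} combined with $f_t(0)=0$ from Theorem~\ref{thm:model assumption correctness}, which converts closeness of gradients directly into closeness of function increments, roughly $\lvert f_i(A)-f_j(A)\rvert \lesssim dist(\nabla_{A}f_i,\nabla_{A}f_j)\,\lVert A\rVert$, and this is where the factor $\sup_{h}\lVert h(\mathbf{X})\rVert$ enters the middle term; you should replace the Taylor step with that argument.
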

\noindent where $c_1$, $c_2$ are universal constants, $G(\mathcal{H}(\textbf{X}))$ is the Gaussian average defined as $G(\mathcal{H}(\textbf{X})) = \mathbb{E}[\sup_{h\in \mathcal{H}} {\textstyle\sum}_{kti}\gamma_{kti}h(\mathbf{x}^{t}_{i})|\mathbf{x}^{t}_{i}]$, where $\{\gamma_{kti}\}$ is i.i.d standard normal variables. $L$ is the Laplacian matrix of graph with $T$ vertices and edge-weights $\{\omega_{ij}\}_{1\leq i < j \leq T}$, and $\lambda_{min}$ is its smallest non-zero eigenvalue. $B$ is any positive value that satisfies the condition ${\textstyle\sum}_{i,j=1}^T \omega_{ij}\cdot dist^{2}(\nabla_{A} f_i, \nabla_{A} f_j) \leq B^2$.

Some remarks over Theorem~\ref{thm:error bound}: \textbf{1).} The first term of the bound can be interpreted as the cost of estimating the shared representation learning function $h\in \mathcal{H}$. This term is typically of order $\frac{1}{n}$. The last term contains the confidence parameter. According to~\cite{maurer2016benefit} the constant $c_1$ and $c_2$ are pretty large, so the last term typically makes limited contribution in the bound. \textbf{2).} The second or middle term contains the cost of estimating task-specific predictors $f\in \mathcal{F}$, and this term is typically of order $\frac{1}{\sqrt{n}}$. Here the positive constant $B$ provides important insights into the relationship between our proposed regularizer and the error bound. \textbf{The smaller our regularization term becomes, the smaller values $B$ could take and in turn reduces the second term in the bound.} In general, our generalization error result bounds the gap between the test error of the model trained from finite samples and that trained from infinite data, namely the theoretically optimal model/solution. In other words, Theorem~\ref{thm:error bound} provides theoretical guarantee for our performance on actual test set.

\subsection{Relation with Other MTL Frameworks}
In this section, we mathematically elucidate the relation and difference between our proposed SRDML and other MTL methods, i.e., shallow MTL and deep MTL. Proof can be found in the appendix.

\noindent\textbf{Natural generalization of shallow MTL.} Following~\cite{zhang2021survey}, traditional multi-task learning methods (i.e., linear model based MTL) can be generally classified into two categories: \emph{multi-task feature learning} and \emph{multi-task relation learning}, with objective function $\min_{W,b,\Theta} L(W,b) + \lambda/2\cdot tr(W^{\intercal}\Theta^{-1}W)$ and $\min_{W,b,\Sigma} L(W,b)+\lambda/2 \cdot tr(W^{\intercal}\Sigma^{-1}W)$, where $\Theta$ and $\Sigma$ models the covariance between different features and tasks, respectively. For any regularization-based shallow MTL defined as above, it can be formulated as a \emph{special case} under the general framework of SRDML, with identity feature extraction function $h$, linear task-specific function $f$ and the corresponding regularizer on the input gradients.

\noindent\textbf{Relation with deep MTL.} Define two hyperparameters: \textbf{1).} The coefficient of regularizer in SRDML $\lambda$, and \textbf{2).} the number of layers $\ell$ before which the model is shared cross tasks. When $\lambda$ equals 0 and $\ell$ is greater than 1 and less than $L$ (total number of layers), SRDML degenerates to hard-parameter sharing. On the other hand, when $\ell$ equals to 1 and $\lambda$ is greater than 0, our SRDML is equivalent to soft-parameter sharing. Hence, both hard-parameter sharing and soft-parameter sharing can be formally formulated as special cases of our proposed SRDML framework.

\section{Experiments}
In this section, we validate SRDML on synthetic and real-world datasets against multiple methods, on various aspects including performance, sensitivity, qualitative analysis and ablation study. The experiments were performed on a 64-bit machine with 4-core Intel Xeon W-2123 @ 3.60GHz, 32GB memory and NVIDIA Quadro RTX 5000. \textbf{Code available at}~\url{https://github.com/BaiTheBest/SRDML}.

\begin{table}[t!]
\scriptsize
\centering
\caption{Attributes summary in CelebA and COCO.}
\vspace{-3mm}
\label{tab:attribute summary}
\begin{tabular}{c|ll||c|ll}
    \toprule
    T.id & CelebA &  COCO & T.id & CelebA &  COCO \\
    \midrule
    1 & ArchedEyebrows & person & 11 & PaleSkin & couch  \\
    2 & BagsUnderEyes & cat & 12 & Sideburns & bed \\
    3 & BlackHair & dog  & 13 & Smiling  & dining table \\
    4 & BrownHair & horse & 14 & WavyHair & laptop \\
    5 & Chubby & car & 15 & WearingLipstick  & tv  \\
    6 & DoubleChin & truck & 16 & Young & cell phone \\
    7 & Goatee & bus & 17 &   & bottle  \\
    8 & HeavyMakeup & motorcycle  & 18 & & cup  \\
    9 & MouthSlightlyOpen & bicycle & 19 &  & bowl  \\
    10 & Mustache & chair &   &   &   \\
    \bottomrule
\end{tabular}
\vspace{-2mm}
\end{table}

\subsection{Experimental Settings}

\begin{figure*}[t!]
\vspace{-0.1cm}
\centering
  \begin{subfigure}[b]{0.08\textwidth}
  %\hspace{0.4cm}
    \includegraphics[width=0.9\textwidth]{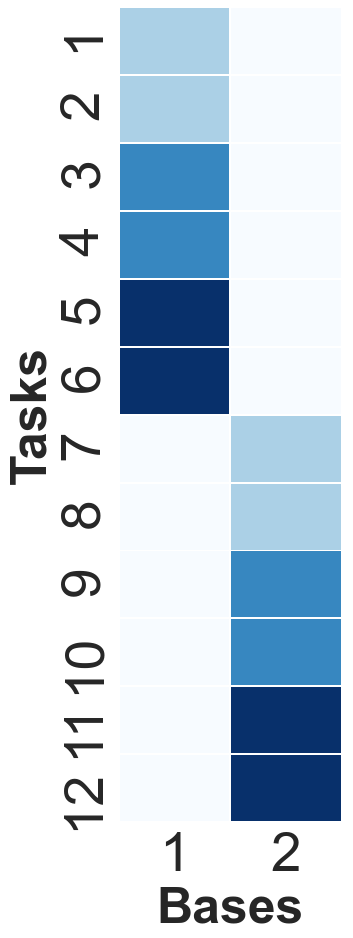}
    \caption{True W}
    \label{fig:ground truth w}
  \end{subfigure}
  %\hfill
  \begin{subfigure}[b]{0.28\textwidth}
  \hspace{0.3cm}
    \includegraphics[width=0.9\textwidth]{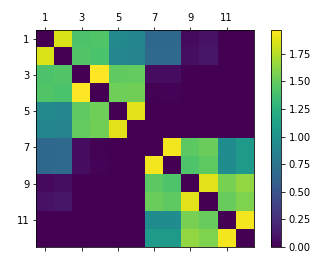}
    \caption{SRDML task relation}
    \label{fig:synthetic matrix}
  \end{subfigure}
  %\hfill
  \begin{subfigure}[b]{0.34\textwidth}
  %\hspace{-0.5cm}
    \includegraphics[width=\textwidth]{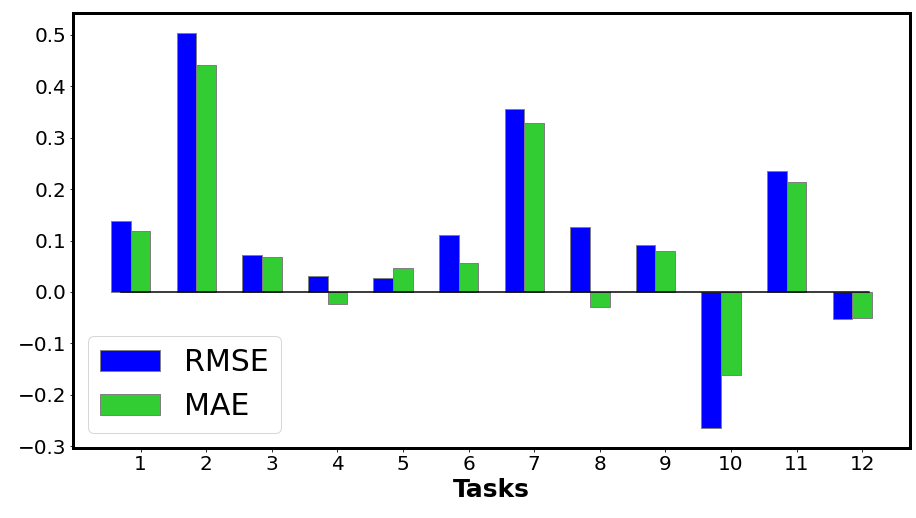}
    \caption{Per-task performance gain over STL.}
    \label{fig:synthetic mtl gain}
  \end{subfigure}
  \begin{subfigure}[b]{0.265\textwidth}
  %\hspace{-0.5cm}
    \includegraphics[width=\textwidth]{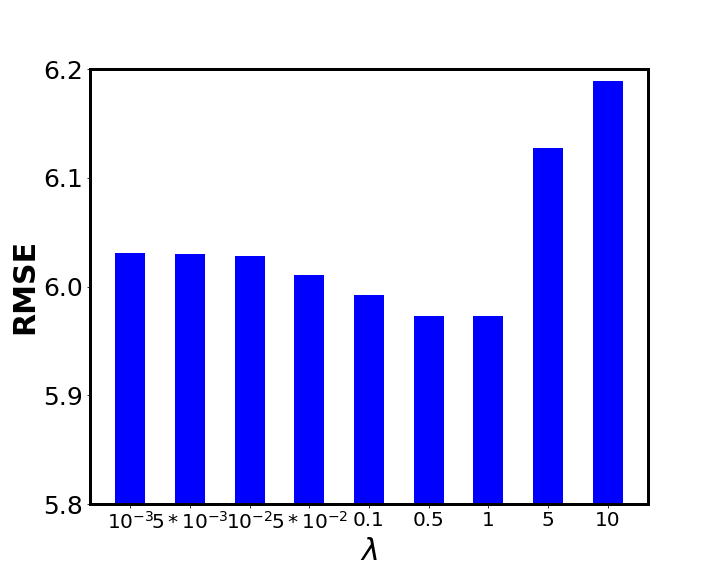}
    \caption{sensitivity analysis.}
    \label{fig:sensitivity}
  \end{subfigure}
  \caption{\textbf{Experimental results on synthetic dataset.} \textbf{(a)}: Ground-truth of each task's feature weight. \textbf{(b)}: Task relation learned by our proposed SRDML. Tasks from different bases show strong independency (as in dark purple), tasks from the same bases show clear similarities (as in light green), and each pair of twin tasks show very strong similarities (as in yellow). \textbf{(c)}: The performance improvement of SRDML over single task learning in RMSE (blue bar) and MAE (green bar). As shown, SRDML model generally outperforms STL on the synthetic dataset by a large margin. \textbf{(d)}: Sensitivity analysis on regularization coefficient.}
\vspace{-0.1cm}
\end{figure*}

\noindent\textbf{Controlled Synthetic Dataset.} We first check the validity of SRDML on a controlled regression synthetic dataset. We generate $T$ tasks ($T=12$) and for each task $i$ we generate $m$ samples ($m=100$). The input data $\textbf{X}_i\in \mathbb{R}^{m\times d}$ ($d=20$) for each task $i$ is generated from $\textbf{X}_i \sim \mathcal{N}(\eta_i,\textbf{I})$ with mean vector $\eta_i$ and identity covariance matrix $\textbf{I}$. Next, we generate feature weight $W$ by the following steps: \textbf{1)} Generate two base feature weights. As shown in Figure~\ref{fig:ground truth w}, the first base feature weight (on the LHS column) corresponds to $\textbf{w}_1=(\mathbf{1};\mathbf{0})^{\intercal}$ and the second base feature weight (on the RHS column) corresponds to $\textbf{w}_2=(\mathbf{0};\mathbf{1})^{\intercal}$, where $\mathbf{1}$ and $\mathbf{0}$ each denotes a 10-dimensional all-one and all-zero vector respectively. In this way, $\textbf{w}_1$ and $\textbf{w}_2$ can simulate two different regions in the input $X$ since the regions zeroed out by $\textbf{w}$ will not be helpful in corresponding tasks. \textbf{2)} Generate task specific feature weight. Based on $\textbf{w}_1$ and $\textbf{w}_2$, we further consider creating different levels of saliency by multiplying the base feature weights by some magnitude parameter. Here we select 3 different magnitude parameters to create different levels of saliency for each base feature weight, and for each level of saliency we create two tasks which are basically twin tasks. For example, in Figure~\ref{fig:ground truth w}, task 1 and task 2 are twin tasks which share the same level of saliency, and the lightest blue color means they are generated by the lowest magnitude parameter. We denote each generated task-specific feature weight as $w_i, \; i \in \{1,2,\cdots,T\}$. The aforementioned logistics are basically symmetric for $\textbf{w}_1$ and $\textbf{w}_2$. \textbf{3)} Add noise and create labels. We first inject some noise into each task's feature weight by randomly flipping the sign of the value in some positions of each $w_i$. The proportion of the flipped positions is controlled to guarantee the overall pattern can be well kept. Then, we generate the label for each task by $\textbf{Y}_i = \textbf{X}_i \cdot w_i + \mathbf{\epsilon}_i$, where $\mathbf{\epsilon}_i\sim \mathcal{N}(\mathbf{0},0.1\cdot\textbf{I})$ is random normal noise.

\noindent\textbf{Real-world Dataset.} We evaluate the proposed method on 3 real-world benchmarks with varying number of tasks and difficulty, including: multi-task version of CIFAR-10~\cite{krizhevsky2009learning} (\textbf{CIFAR-MTL}), a modified version of \textbf{CelebA}~\cite{liu2015faceattributes} and a modified version of \textbf{MS-COCO}~\cite{lin2014microsoft}. To follow our model's assumption, all tasks are image classification ones. For CIFAR-MTL, we follow existing work~\cite{rosenbaum2017routing} to create one task for each of the 10 classes in origianl CIFAR-10 dataset. There are 10 binary classification tasks with 2k training samples and 1k testing samples per task. CelebA has 200 thousand images of celebrity faces and each image is labeled with 40 facial attributes. We follow existing work~\cite{zhao2018modulation} to select 16 attributes more related to face appearance and ignore attributes around decoration such as eyeglasses and hat for our experiments. We randomly selected 30k training samples and include whole validation and test set. For MS-COCO we select 19 types of objects and remove those with too sparse labels. We include all images that contain at least two of the 19 types of objects and randomly split them into training and testing set by half. All results are reported on the test set. For hyperparameter tuning of our method, without further specification, we applied grid search on the range of $\{10^{-3}, 5*10^{-3},\cdots,0.5,1\}$ for the regularization coefficient.

\begin{table*}[t]
\centering
\small
%\vspace{-0.1cm}
\caption{Performance (\%) on real-world large-scale multi-task learning datasets. Our proposed SRMTL outperforms most comparison methods on all three datasets. Bold and underline score are for the best and second best methods, respectively.}
\vspace{-2mm}
    \begin{tabular}{l|cccc|cccc|cccc}
    \toprule
    \multirow{2}{*}{\textbf{Model}} & \multicolumn{4}{c}{\textbf{CIFAR-MTL}} & \multicolumn{4}{c}{\textbf{CelebA}} & \multicolumn{4}{c}{\textbf{COCO}}\\  \cline{2-13}
           & Accuracy   &   AUC   &  Precision     &  Recall  & Accuracy   &   AUC   &  Precision     &  Recall & Accuracy   &   AUC   &  Precision     &  Recall  \\ 
    \midrule 
     STL & 92.65 & 66.20 & 71.32 & 69.83 & 86.83 & 90.96 & 70.53 & 60.39 & 79.23  & 62.91 & 32.23 & 27.04 \\
     Hard-Share & 94.70 & 95.56 & 76.30 & 72.28 & 89.24 & 91.38 & 71.40 & 58.84 & 85.11 & 73.68 & \textbf{37.43} & 19.84 \\
     \hline
     Lasso & 91.48 & 86.64 & 68.90 & 24.74 & 76.55 & 66.69 & 37.38 & 36.62 & 78.36 & 64.40 & 28.53 & 28.61  \\
     L21 & 91.50 & 87.58 & 68.01 & 29.32 & 76.09 & 66.12 & 37.11 & 36.13 & 75.07 & 65.02 & 28.95 & 27.34  \\
     RMTL & 92.28 & 85.65 & 61.54 & 28.15 & 75.52 & 66.99 & 37.48 & 36.74 & 76.87 & 65.01 & 29.28 & 28.43   \\
     \hline
     MRN & 94.51 & 96.67 & 79.94 & \underline{76.95} & 89.35 & 91.54 & 71.51 & 64.64 & 85.13  & 75.88  & 32.73 & 25.89  \\
     MMoE & 93.53 & 93.17  & 73.42  & 69.32  &  77.57  &  67.84  &  68.79  &  58.92  &  81.20  &  62.37  &  33.08  & 26.14 \\
     PLE  & 94.01 & 93.32  &  75.26 & 70.15  &  83.21  &  69.32  &  70.03  &  59.72    &  82.53  &  63.42  &  35.27  & 27.53 \\
     MGDA-UB & 90.74 & 84.38 & 57.80 & 24.10 & 90.03 & 92.92 & 73.42 & 62.65 & 84.51 & 73.68 & \underline{36.17} & 16.08  \\
     PCGrad & 95.11 & \underline{96.69} & 79.03 & 74.82 & \underline{90.11} & 92.87 & 73.51 & 62.92 & 85.42 & 74.39 & 34.52 & 25.26   \\
     \hline
     SRDML & \underline{95.82} & 96.43 & \underline{81.22} & 75.93 & 90.15 & \underline{92.95} & \underline{73.87} & \underline{64.91} & \underline{85.68} & \underline{76.77} & 35.82 & \underline{28.73}  \\
     SRDML (w/. PCGrad) & \textbf{96.03} & \textbf{96.72} & \textbf{82.59} & \textbf{77.01} & \textbf{90.26} & \textbf{93.01} & \textbf{73.93} & \textbf{65.30} & \textbf{85.87} & \textbf{78.38} & 36.14 & \textbf{30.02}  \\
    \bottomrule
    \end{tabular}%
    \label{tab:results_clf}
    %\end{adjustwidth}
%\vspace{-0.1cm}
\end{table*}

\noindent\textbf{Comparison Methods} We compare SRDML with various existing methods, including two baselines, three shallow and five deep sate-of-the-art MTL methods: 

\begin{itemize}[leftmargin=*]
    \item \textbf{Practical Baselines:} \emph{1). Single Task Learning (STL)} is to train a separate predictor for each task independently. \emph{2) Hard Parameter Sharing (Hard-Share)} considers a shared representation learning backbone (e.g., convolutional layers in CNN) and task-specific prediction head.
    \item \textbf{Shallow MTL Methods:} \emph{1) Lasso} is an $\ell_1$-norm regularized method which introduce sparsity into the model to reduce model complexity and feature learning, and that the parameter controlling the sparsity is shared among all tasks. \emph{2) Joint Feature Learning (${L_{21}}$)}~\cite{evgeniou2007multi} assumes the tasks share a set of common features that represent the relatedness of multiple tasks. \emph{3) Robust Multi-task Learning (RMTL)}~\cite{chen2011integrating} method assumes that some tasks are more relevant than others. It assumes that the model $W$ can be decomposed into a low rank structure $L$ that captures task-relatedness and a group-sparse structure $S$ that detects outliers. 
    \item \textbf{Deep MTL Methods:} \emph{Multilinear Relationship Networks (MRNs)} places a tensor normal prior on task-specific layers of the deep multi-task learning model~\cite{long2017learning}. \emph{2) Multi-gate Mixture-of-Experts (MMoE)}~\cite{ma2018modeling} adapt the Mixture-ofExperts (MoE) structure to multi-task learning by sharing the expert submodels across all tasks, while also having a gating network. \emph{3) Progressive layered extraction (PLE)}~\cite{tang2020progressive} separates shared components and task-specific components explicitly and adopts a progressive routing mechanism to extract and separate deeper semantic knowledge gradually, improving efficiency of joint representation learning and information routing across tasks in a general setup. \emph{4) Multi-task Learning as Multi-Objective Optimization (MGDA-UB)}~\cite{sener2018multi} considers multi-task learning from optimization perspective by using Pareto optimality and Multiple Gradient Descent Algorithm. \emph{5) Gradient Surgery for Multi-task Learning (PCGrad)}~\cite{yu2020gradient} aims to solve the problem of gradient interference by gradient surgery, which is basically by gradient projection to make sure the gradients of different tasks have direction smaller than $90^{\circ}$. Since PCGrad targets gradient interference, it is only applied onto the shared layers of each model to avoid the contradiction of each task's gradients. Specifically, PCGrad is applied onto the shared convolutional layers.
\end{itemize}

\noindent\textbf{Implementation Details.} All shallow MTL methods are implemented according to standard package MALSAR~\cite{zhou2011malsar}. Deep MTL methods and our SRDML are built based on VGG-16~\cite{simonyan2014very}, which is a very popular architecture in computer vision. The convolutional layers are followed by one fully connected layer with 128 hidden neurons and one classification layer for our SRDML. Each model is trained by Adam~\cite{kingma2014adam}. For PCGrad, due to the fact that it is a gradient surgery method which is model-agnostic and can be applied onto any deep MTL method, we report its performance by combining it with the best baseline on each real-world dataset (i.e., Hard-Share on CIFAR-MTL, MGDA-UB on CelebA, MRN on COCO). In addition, we also consider applying PCGrad onto our own method SRDML, resulting in two versions of our method, namely SRDML and SRDML with PCGrad.

\subsection{Experimental Results}

\noindent\textbf{Effectiveness on controlled synthetic dataset.} The empirical results on the regression synthetic dataset demonstrate that our model can generally outperform single task learning and is capable to capture the ground-truth task relations. Quantitative evaluation in Figure~\ref{fig:synthetic mtl gain} shows that SRDML can outperform single task learning in general, which can be attributed to the effective knowledge sharing between task-specific layers. In addition, the task relationship pattern (i.e., $w_{ij}$ in Eq.~\ref{eq:objective funciton with normalization}) learned by SRDML as shown in Figure~\ref{fig:synthetic matrix} is accurate and reasonable, since tasks belong to different bases are well-separated and meanwhile each pair of twin tasks shows very strong correlation (corresponds to those yellow boxes). Within each base, different pairs of twin tasks also show relatively strong relationship due to the fact that they share the same base and only differ in the magnitude.

\noindent\textbf{Sensitivity analysis.} The sensitivity of hyperparameter $\lambda$ in SRDML on synthetic dataset is shown in Figure~\ref{fig:sensitivity}. As can be seen, the optimal value for $\lambda$ is around 0.5 meansured by RMSE. The general "U" shape is potentially reasonable because as $\lambda$ goes to infinity the regularization term would dominate the overall objective function while too small $\lambda$ will reduce the functionality of the regularizer and finally degenerate to single task learning.

% \begin{table}[t!]
% \small
% \centering
% %\vspace{-0.4cm}
% \begin{tabular}{lcccc}
% \hline
%  Model & Accuracy & AUC & Precision & Recall \\
% \hline
%  Lasso & 91.48 & 86.64 & 68.90 & 24.74 \\
%  L21 & 91.50 & 87.58 & 68.01 & 29.32 \\
%  RMTL & 92.28 & 85.65 & 61.54 & 28.15 \\
%  STL & 94.35 & 66.69 & 74,32 & 69.83 \\
%  TCGCN & 94.70 & 95.56 & 76.30 & 72.28 \\
%  MRN & 94.51 & 96.67 & 79.94 & \textbf{76.95} \\
%  MGDA-UB & 90.74 & 84.38 & 57.80 & 24.10 \\
%  PCGrad & 95.11 & \textbf{96.69} & 79.03 & 74.82 \\
%  \hline
%  SRDML (ours) & \textbf{95.82} & 96.43 & \textbf{81.22} & 75.93 \\
%  SRDML + PCGrad & \textbf{96.03} & \textbf{96.72} & \textbf{82.59} & \textbf{77.01} \\
% \hline
% \end{tabular}
% \caption{Performance on CIFAR-MTL.}
% \label{tab:cifar perforamnce}
% \end{table}

\begin{figure*}[t!]
\centering
  \begin{subfigure}[c]{0.485\textwidth}
    \includegraphics[width=\textwidth]{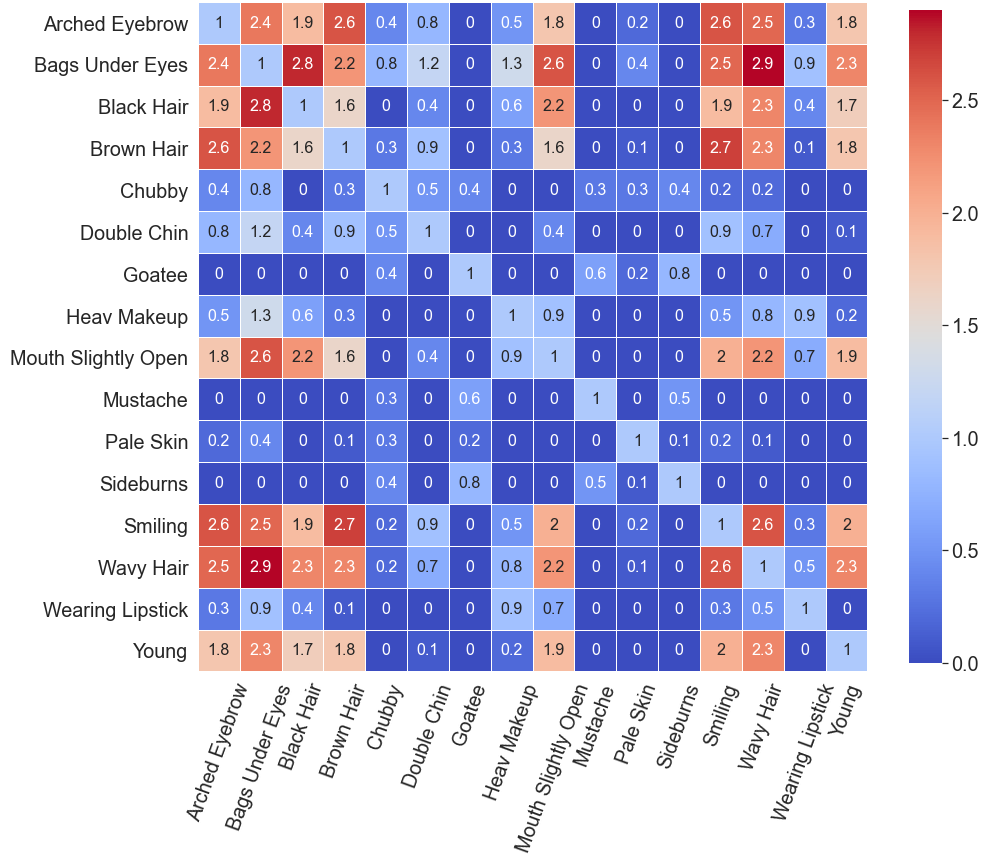}
    \caption{$\textbf{CelebA}$}
    \label{fig:f1}
  \end{subfigure}
  \hspace{0.3cm}
  %\hfill
  \begin{subfigure}[c]{0.48\textwidth}
    \includegraphics[width=\textwidth]{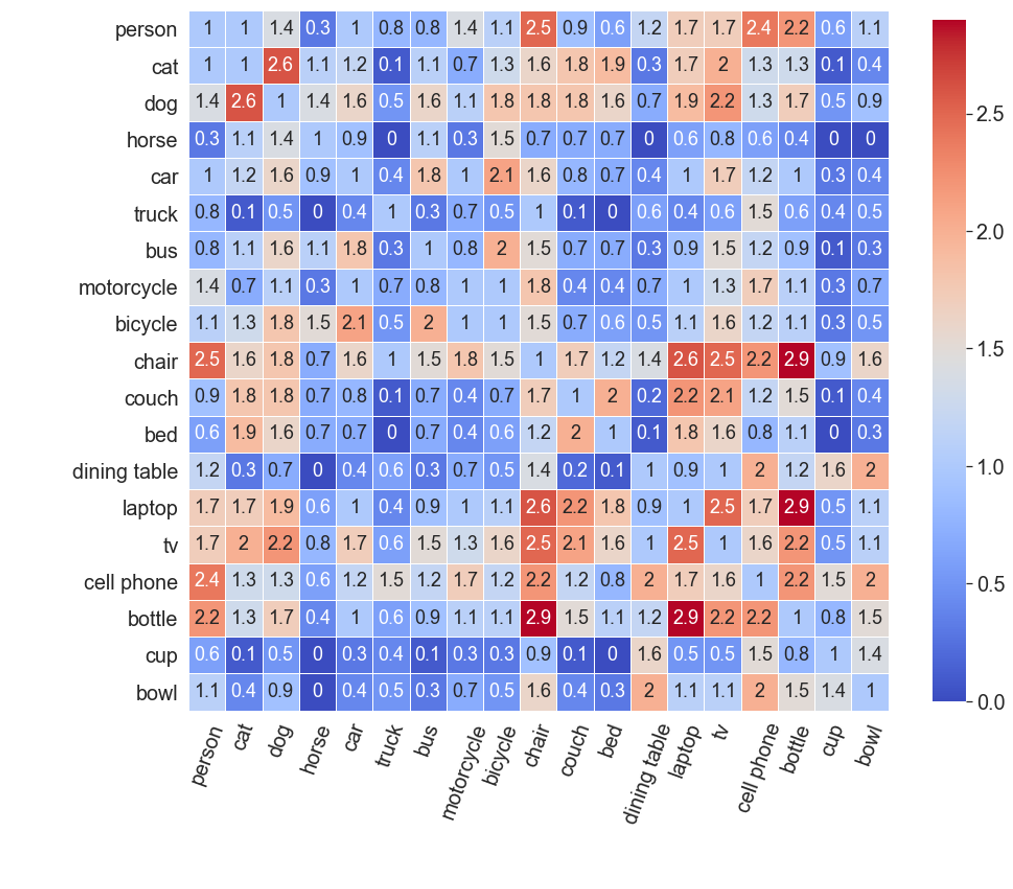}
    \caption{$\textbf{COCO}$}
    \label{fig:f2}
  \end{subfigure}
  \vspace{-0.2cm}
  \caption{Visualization of task relation learned by SRDML on real-world dataset. Zoom in for detail.}
   \vspace{-1mm}
  \label{fig:task relation on real dataset}
\end{figure*}

\noindent\textbf{Effectiveness on real-world datasets.} 
\begin{itemize}[leftmargin=*]
    \item \textbf{CIFAR-MTL:} Table~\ref{tab:results_clf} shows the performance results of our proposed SRDML and other baselines on CIFAR-MTL dataset. We can make the following observations from the results. \textbf{1).} Deep multi-task learning models generally outperform shallow ones by a great margin, which confirms the importance of learning the deep representation features as well as the shared policy of feature extraction part which allows knowledge transfer across tasks. \textbf{2).} Our proposed SRDML outperforms baselines in majority of metrics and achieves comparable performance in rest. \textbf{3).} Combining with PCGrad can further improve the performance of SRDML due to the mitigated negative transfer in the shared convolutional layers by gradient surgery of PCGrad.
    \item \textbf{CelebA:} In this case, we tackle a larger and more challenging benchmark, where we tailored the dataset to contain 16 binary classification tasks with each one corresponding to a certain human's facial feature. As shown in Table~\ref{tab:results_clf}, our model outperforms all comparison methods in majority of metrics, which is attributed to the potential fact that the salient regions in some tasks are close to those in the related tasks. For example, there are two tasks to classify whether a celebrity's beard is goatee or mustache, respectively. For both tasks the salient regions are highly overlapped around the mouth area (as can be seen in Section "Saliency map visualization" in appendix) so enforcing similar input gradients around the mouth area could improve the knowledge transfer and achieve better performance. 
    \item \textbf{COCO:} To evaluate our model under various settings, we consider COCO which contains different types of objects like human, animals, vehicles, furniture, etc, and each type object has varying rate of occurrence. In Table~\ref{tab:results_clf}, we report the task-average classification error with lower values indicating better performance. As shown in Table~\ref{tab:results_clf}, our proposed SRDML outperforms all the baselines by a great margin. This experiment also validates the effectiveness of our model when the number of tasks is relatively large and the image context is complicated. Moreover, MMoE and PLE perform generally not quite well probably due to the fact that these two approaches are designed for multi-task learning under recommender system scenario, which is not similar to that in multi-task image classification, e.g., the number of tasks in our case is much larger and hence more challenging.
\end{itemize}

\noindent\textbf{Qualitative analysis.} Here we demonstrate that SRDML can learn reasonable task relations on challenging real-world datasets by visualizing the task weight matrix (i.e., $w_{ij}$ in Eq.~\ref{eq:objective funciton with normalization}). As shown in Figure~\ref{fig:task relation on real dataset}, many highlighted task relations are intuitive. In CelebA, our proposed SRDML successfully learned the similarity of tasks sharing the same/similar regions around face, lie "Arched Eyebrow" and "Bags Under Eyes"; "Black Hair", "Brown Hair" and "Wavy Hair"; "Goatee", "Sideburns" and "mustache", etc. On the other hand, our model can also learn reasonable task similarities in COCO, including "cat" and "dog"; "car", "bus" and "bicycle"; "couch" and "bed", etc. We also conducted qualitative analysis experiment on the saliency map generated by our proposed SRDML on similar or related tasks. Please refer to the appendix for the detail.

\noindent\textbf{Adaptive regularizer on contradicting tasks.} In this section, we conducted another sensitivity analysis when all tasks compete (we generate such synthetic dataset by following a similar procedure introduced in Section 5.1), and the results in Table~\ref{tab:contro task} demonstrate the efficacy of our regularization term, which can adaptively decrease the task-similarity weight to zero and avoid competition.  

\begin{table}[t!]
\centering
%\vspace{-1mm}
\caption{Sensitivity analysis on regularizer coefficient when tasks are contradicting. Our regularizer coefficient can adaptively reduce to zero and avoid negative transfer.}
 \vspace{-2mm}
\begin{tabular}{ccccccc}
\toprule
 $\lambda$ & 1 & 0.1 & 0.01 & 0.001 & 0 \\
\hline
 RMSE.  & 2.726 & 1.550 & 1.405 & 1.393 & 1.392 \\
 MAE.   & 2.198 & 1.260 & 1.127 & 1.127 & 1.126 \\
\bottomrule
\end{tabular}
\vspace{-4mm}
\label{tab:contro task}
\end{table}

\noindent\textbf{Ablation study.} In this section, we present an ablation study on the task relation learning part in the regularizer. Specifically, we remove the $\{\omega_{ij}\}_{1\leq i < j \leq T}$ in Eq.~\ref{eq:objective funciton with regularizer} and the coefficient for each term in the regularizer is just the hyperparameter $\lambda$. We conducted experiments on all three real-world datasets to see the difference, and the results are shown in Table~\ref{tab:ablation study}.

\begin{table}[h!]
\small
\centering
\caption{Ablation study on adaptive regularizer (Accuracy)} 
 \vspace{-2mm}
\begin{tabular}{cccc}
\toprule
 & CIFAR-MTL & CelebA & MS-COCO \\
\hline
 SRDML. (\textbf{w/o} regularizer) & 94.92 & 89.74 & 85.18  \\
 SRDML. (\textbf{w/.} regularizer) & \textbf{95.82} & \textbf{90.15} & \textbf{85.68} \\
\bottomrule
\end{tabular}
\label{tab:ablation study}
\end{table}

\section{Conclusion}
Learning interpretable task relations is challenging in multi-task learning problem. In this paper, we proposed Saliency-regularized Deep Multi-task Learning (SRDML) framework, which regularizes the input gradient of different tasks by saliency and achieves good task relation interpretability. Instead of regularizing parameters like existing work, we directly regularize in functional space, which allows better expressiveness. Theoretical analyses show that our regularizer can help reduce the generalization error. Experiments on multiple synthetic and real-world datasets demonstrate the effect and efficiency of our methods in various metrics, compared with several comparison methods and baselines. The reasonableness of the task relation learned by SRDML is also validated on different challenging real-world datasets.

\section*{Acknowledgement}
    This work was supported by the National Science Foundation (NSF) Grant No. 1755850, No. 1841520, No. 2007716, No. 2007976, No. 1942594, No. 1907805, a Jeffress Memorial Trust Award, Amazon Research Award, NVIDIA GPU Grant, and Design Knowledge Company (subcontract number: 10827.002.120.04).

\bibliographystyle{ACM-Reference-Format}
\bibliography{srdml}

%%%%%%%%%%%%%%%%%%%%%%%%%%%%%%%%%%%%%%%%%%%%%%%%%%%%%%

\appendix
\newpage
%\onecolumn

\begin{center}
    {\huge \textbf{Appendix}} 
\end{center}

\noindent In this appendix, we describe detailed experimental setup, additional experimental results, and complete proofs. Our code is available at~\url{https://github.com/BaiTheBest/SRDML}. Please note that the code is subjected to reorganization to improve the readability.  

\section{Theoretical Proof}
In this section, we provide the formal proof for all the theories presented in Saliency-regularized Deep Multi-task Learning paper.

\subsection{Proof of Theomre 1}

\begin{proof}
Suppose $\mathcal{X}\subseteq \mathbb{R}^{K}$ is an open set and $f_1, f_2: \mathcal{X}\to\mathbb{R}$, where both functions are differentiable and equal to zero at the origin.

\noindent"$\Longrightarrow$": This direction is obvious, since two exactly the same functions will have the same gradient at any input point. 

\noindent"$\Longleftarrow$": Given $\nabla f_1 (\mathbf{x}) = \nabla f_2 (\mathbf{x})$, we know that
\begin{equation}
    \partial f_1/\partial x_k = \partial f_2/\partial x_k, \; k = 1,2,\cdots,K, \;\forall\; \mathbf{x}\in \mathcal{X}
\end{equation}
For arbitrary $k$, by $\partial f_1/\partial x_k = \partial f_2/\partial x_k$, we know that
\begin{equation}
    \exists\;c_k (x_1,\cdots,x_{k-1},x_{k+1},\cdots,x_K),\; s.t.,\; f_1 = f_2 + c_k
\end{equation}
Meanwhile, notice $\forall\; l\neq k,\quad \partial c_k / \partial x_l = 0$ (otherwise, contradiction!) Hence, $d c_k = 0$ and we know $c_k$ is a constant. Also, the value of $c_k$ does not depend on $k$ since for all $k,l$, we have $f_1 - f_2 = c_k = c_l$, thus there exists a constant $c$ such that $f_1 = f_2 + c$. Finally, by the boundary condition that $f_1 (\mathbf{0}) = f_2 (\mathbf{0}) = 0$, we know that $c = 0$, i.e., $f_1 = f_2$, which finishes the proof.
\end{proof}

\subsection{Proof of Theorem 2}

In this section, we provide the proof of our model's generalization error bound. First, we introduce some definitions and lemmas which will be continuously used, and at the end of this section we present the proof for Theorem~\ref{thm:error bound}.

In general, we will use $\gamma$ to denote a generic vector of i.i.d standard normal variables, whose dimension will be clear in context. In addition, without further specification, we will use $K$, $T$, $n$ to denote the (flattened) dimension of the output space from the feature extraction function $h$, number of tasks, and number of training samples, respectively. We denote the representation class for task-specific function $f$ and representation extraction function $h$ as $\mathcal{F}$ and $\mathcal{H}$, respectively. Two hypothesis classes here can be very general, and the only assumption here is that $\forall f\in \mathcal{F}$, $f$ has Lipschitz constant at most L, for any positive L. 

\begin{definition}
Given a set $V \subseteq \mathbb{R}^{n}$, define the Gaussian average of $V$ as
\begin{equation}
    G(V) \coloneqq \mathbb{E}\sup_{v\in V} \langle \gamma, v\rangle = \mathbb{E}\sup_{v\in V}\displaystyle\sum_{i=1}^{n} \gamma_i v_i
\label{eq:gaussian average}
\end{equation}
\end{definition}

As mentioned in section 3.1 in main paper, we denote the feature representation learning part as function $h\in \mathcal{H}$. As we will see later, the complexity of representation class $\mathcal{H}$ is important in our proof for the error bound, so we define a measure of its complexity by Gaussian average.
\begin{definition}
Given observed input data $\mathbf{X}\in \mathcal{X}^{Tn}$, define a random set $\mathcal{H}(\mathbf{X})\subseteq\mathbb{R}^{KTn}$ by
\begin{equation}
    \mathcal{H}(\mathbf{X}) \coloneqq \left\{(h_{k}(\mathbf{x}_{i}^{t})): h\in \mathcal{H} \right\}.
\end{equation}
The Gaussian average over $\mathcal{H}(\mathbf{X})$ can be defined accordingly as
\begin{equation}
    G(\mathcal{H}(\mathbf{X})) = \mathbb{E}[\sup_{h\in\mathcal{H}}\displaystyle\sum_{kti}^{K,T,n}\gamma_{kti}h_{k}(\mathbf{x}_{ti})|\mathbf{x}_{ti}]
\end{equation}
\end{definition}

The following lemmas are useful in our proof later, and we introduce them here in advance.

\begin{lemma}
\label{lem:trace}
$\forall\; A, C\in\mathbb{R}^{m\times n}$ and $B\in\mathbb{R}^{m\times m}$, 
\begin{equation}
    tr(A^{\intercal}BC) = \displaystyle\sum_{i,j}^{m} B_{ij}\displaystyle\sum_{k=1}^{n} A_{ik}C_{jk}.
\end{equation}
\end{lemma}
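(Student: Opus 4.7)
The plan is to prove this identity by direct coordinate expansion of the matrix product, since the statement is purely an index-bookkeeping claim. First I would record the dimensions: $A^{\intercal}$ is $n\times m$, $B$ is $m\times m$, and $C$ is $m\times n$, so the product $A^{\intercal}BC$ is well-defined and is $n\times n$, hence its trace is the sum of $n$ diagonal entries.

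Next I would write out the $(k,k)$-th diagonal entry explicitly using the definition of matrix multiplication, inserting two intermediate summation indices $i, j \in \{1,\ldots,m\}$ for the two multiplications. Using $(A^{\intercal})_{ki} = A_{ik}$, this gives
\begin{equation*}
(A^{\intercal}BC)_{kk} = \sum_{i=1}^{m}\sum_{j=1}^{m} A_{ik}\, B_{ij}\, C_{jk}.
\end{equation*}
Summing over $k$ from $1$ to $n$ yields the triple sum $\sum_{k=1}^{n}\sum_{i,j=1}^{m} A_{ik} B_{ij} C_{jk}$. Since the sums are finite, I can swap the order so that the $k$-sum is innermost, and pull out $B_{ij}$ (which does not depend on $k$), producing $\sum_{i,j=1}^{m} B_{ij} \sum_{k=1}^{n} A_{ik} C_{jk}$, which is exactly the right-hand side.

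There is no substantive obstacle here; the lemma is a utility identity stated in preparation for the generalization-error computation, and the only things to be careful about are the transpose convention (ensuring $(A^{\intercal})_{ki} = A_{ik}$) and keeping the summation index ranges $\{1,\ldots,m\}$ versus $\{1,\ldots,n\}$ straight. Once the product is written coordinatewise, the identity follows from a single reordering of finite sums.
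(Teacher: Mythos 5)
Your proof is correct: the coordinate expansion $(A^{\intercal}BC)_{kk} = \sum_{i,j=1}^{m} A_{ik}B_{ij}C_{jk}$ followed by summing over $k$ and reordering the finite sums gives exactly the stated identity. The paper states this lemma without proof (treating it as an elementary bookkeeping fact), and your argument is precisely the standard verification it implicitly relies on.
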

\vspace{-0.4cm}

\begin{lemma}
\label{lem:differntial and gradient}
Suppose $\mathcal{X}\subseteq \mathbb{R}^{K}$ is an open set, and two differentiable functions $f_1, f_2\; : \mathcal{X}\rightarrow\mathbb{R}$. $\forall x\in \mathcal{X}$, if
\begin{equation}
    \exists\; B > 0, \;\; s.t\; \left\lVert \nabla f_1 (x) - \nabla f_2 (x) \right\rVert < B
\end{equation}
then
\begin{equation}
    \displaystyle\lim_{\Delta x\to 0} \left\vert \frac{f_{1}(x+\Delta x)-f_{1}(x)}{\left\lVert \Delta x \right\rVert} - \frac{f_{2}(x+\Delta x)-f_{2}(x)}{\left\lVert \Delta x \right\rVert} \right\vert < B.
\end{equation}
\end{lemma}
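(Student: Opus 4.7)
The plan is to reduce the conclusion to a bound on a directional derivative via the definition of differentiability, and then apply Cauchy--Schwarz to the gradient hypothesis. First I would set $g = f_1 - f_2$; as a difference of differentiable functions, $g$ is itself differentiable at $x$ with $\nabla g(x) = \nabla f_1(x) - \nabla f_2(x)$, so the hypothesis becomes $\|\nabla g(x)\| < B$, and the quantity to bound is the absolute value of the difference quotient of $g$ at $x$.

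By differentiability of $g$ at $x$ we have $g(x+\Delta x) - g(x) = \nabla g(x) \cdot \Delta x + r(\Delta x)$ with $r(\Delta x) = o(\|\Delta x\|)$ as $\Delta x \to 0$. Dividing by $\|\Delta x\|$ and writing $u = \Delta x / \|\Delta x\|$ for the unit direction yields
\[
\frac{g(x+\Delta x)-g(x)}{\|\Delta x\|} = \nabla g(x)\cdot u + \frac{r(\Delta x)}{\|\Delta x\|},
\]
whose second summand vanishes as $\Delta x \to 0$. Applying Cauchy--Schwarz to the leading term gives $|\nabla g(x)\cdot u| \leq \|\nabla g(x)\| \cdot \|u\| = \|\nabla g(x)\| < B$, uniformly in the unit vector $u$. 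This uniform bound then transfers to the limit on the left-hand side of the claimed inequality.

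The main obstacle is interpretive rather than computational: because the difference quotient depends on the direction of $\Delta x$, the ``limit'' in the statement need not exist as a single number, so the conclusion must be read as a uniform bound over all paths $\Delta x \to 0$ (equivalently, as a bound on the $\limsup_{\Delta x \to 0}$). The uniformity of the Cauchy--Schwarz bound over all unit vectors $u$ is exactly what makes this reading automatic, and the strict gap $B - \|\nabla g(x)\| > 0$ absorbs the $o(1)$ remainder, so the strict inequality survives in the limit rather than degrading to $\leq B$.
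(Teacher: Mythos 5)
Your proof is correct, and it is the natural argument: writing $g = f_1 - f_2$, using first-order differentiability $g(x+\Delta x)-g(x) = \nabla g(x)\cdot \Delta x + o(\lVert \Delta x\rVert)$, and bounding $\lvert \nabla g(x)\cdot u\rvert \le \lVert \nabla g(x)\rVert < B$ by Cauchy--Schwarz, with the strict gap $B - \lVert\nabla g(x)\rVert$ absorbing the $o(1)$ term. The paper states this lemma without an in-text proof (its formal arguments for the generalization bound are deferred to an external document), so there is no written proof to compare against, but nothing more than your argument is needed. Your interpretive remark is also well taken: since the difference quotient depends on the direction $u = \Delta x/\lVert\Delta x\rVert$, the limit in the lemma's conclusion generally does not exist unless $\nabla g(x) = 0$, so the statement should indeed be read as a bound on $\limsup_{\Delta x \to 0}$ (equivalently, a uniform bound over all directions), which your Cauchy--Schwarz estimate delivers; flagging that imprecision in the statement is a genuine improvement rather than a gap in your proof.
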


Given everything above, we can prove our Theorem~\ref{thm:error bound}. However, the formal proof of Theorem~\ref{thm:error bound} is quite complicated and due to the limited space of appendix here, we decide to put the formal proof for our Theorem~\ref{thm:error bound} into the \textbf{link}:~\url{https://drive.google.com/file/d/1Mtbf5zpftIP9F31V5vgsXUU07tYjf0Ad/view?usp=sharing}. Please refer to the link for our formal proof.

\subsection{Proof of Section 4.2}

\textbf{Natural generalization of shallow MTL}

\begin{proof}
Basically, when the feature extraction function $h$ is identity function and each task-specific function $f_t$, $t=1,2,\cdots,T$ are linear functions, we know for any input $x\in\mathcal{X}$,
\begin{equation}
    h(x) = x, \quad \nabla f_t (x) = w_t,\; \forall t
\end{equation}
where $w_t$ is the model parameter of linear model $f_t$. Hence, denote $W=[w_1;w_2;\cdots;w_T]$ and take the $dist()$ function in Eq.~\ref{eq:objective funciton with regularizer} to be inner product, by Lemma~\ref{lem:trace} we have
\begin{equation}
    \begin{split}
        \sum\nolimits_{i,j} \omega_{ij} \cdot dist(\nabla f_i (x), \nabla f_j (x)) &= \sum\nolimits_{i,j} \omega_{ij} \cdot \langle w_i, w_j \rangle \\ &= tr(W^{\intercal}\Omega W)
    \end{split}
\end{equation}
where $\Omega = (\omega_{ij})$. Let $\Omega$ to be either $\Theta^{-1}$ or $\Sigma^{-1}$ as in section 4.2 can finish the proof.

\end{proof}

\noindent\textbf{Relation with deep MTL}

\begin{proof}
First, we define two hyperparameters:
\begin{itemize}
    \item $\lambda$: The coefficient of our regularizer in SRDML
    \item $\ell$: The index of the layer before which the model is shared cross different tasks.
\end{itemize}

\textbf{Case 1.} If $\lambda=0$ and $1 < \ell < L$, where $L$ (please differentiate this $L$ with that for Lipschitz constant) denotes the total number of layers, our SRDML has no regularization and is simply equivalent to hard-parameter sharing.

\textbf{Case 2.} If $\lambda > 0$ and $\ell = 1$, each layer in our SRDML is separate for different tasks and the regularization is posed on all the layers, which is equivalent to soft-parameter sharing.

\end{proof}

\section{Additional Details on Synthetic Dataset Generation}

\paragraph{“What are base feature weights?”} Since we want to generate tasks with different level of similarity in our synthetic dataset, we achieved it by controlling the similarity in the feature weight (i.e., w) of different tasks. The base feature weights $w_1$ and $w_2$ are basically two vectors (with length equal to number of features) for generating the feature weight vectors for all the tasks. We call them “base” feature weight because they serve as the base vector or unit vector for generating all the tasks’ feature weights. In addition, $w_1$ and $w_2$ are orthogonal and each has length 1 in any dimension.

\paragraph{“How are base feature weights used?”} The base feature weights are used to generate each task’s feature weight in the following steps: 1) We choose which base the current task belongs to. In our setting, we chose the first half of tasks to belong to the first base (i.e., $w_1$) and the second half of tasks to belong to the second base (i.e., $w_2$). Since two bases are orthogonal, they can actually simulate two non-overlapping regions in pictures which means tasks from different bases should not be similar while those from the same base should be similar since, they focus on the same region. 2) Within each base, we multiply the base vector (i.e., $w_1$ and $w_2$) by some positive integers to generate the actual feature weight for the tasks. For example, we multiply $w_1$ by integer 1, 2 and 3 to generate the feature weight vectors for the first half of tasks.

\section{Normalization on Input Gradient}

We add an experiment on our method with normalizing the input gradients and compare its results with our original method (i.e., without normalization) on ALL 3 real-world dataset we used in our original paper. As shown in the Table~\ref{tab:effect of normalization}, adding normalization did not obviously change the performance in task-average classification error. The task-average classification error decreased by $<$ 0.2\% on CIFAR-MTL and increased by around 0.1\% on CelebA and COCO. One explanation is, for similar tasks like “Black hair” and “Brown hair” in CelebA, we empirically observed that the magnitude for the gradients was close to each other, which might limit the point in applying gradient normalization in such case.

\begin{table}[h!]
\centering
\small
\caption{Normalization of input gradient} 
\begin{tabular}{lccc}
\toprule
 & CIFAR-MTL & CelebA & MS-COCO \\
\hline
 SRDML \textbf{w/o} normalization & 4.18 & 9.91 & 14.32  \\
 SRDML \textbf{w/} normalization & 4.02 & 10.03 & 14.41 \\
\bottomrule
\end{tabular}
\label{tab:effect of normalization}
\end{table}

\section{Additional Qualitative Analysis}

\begin{figure}[h!]
\centering
  \begin{subfigure}[b]{0.15\textwidth}
  %\hspace{-0.2cm}
    \includegraphics[width=0.9\textwidth]{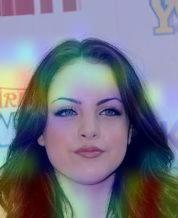}
    \caption{black hair}
    \label{fig:black hair}
  \end{subfigure}
  %\hfill
  \begin{subfigure}[b]{0.15\textwidth}
  %\hspace{0.3cm}
    \includegraphics[width=0.9\textwidth]{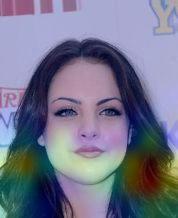}
    \caption{brown hair}
    \label{fig:brown hair}
  \end{subfigure}
  \vspace{-3mm}
  \caption{Saliency map generated by SRDML for hair tasks.}
  \label{fig:hair}
\end{figure}
%\vspace{-2mm}

\begin{figure}[t!]
 \vspace{-2mm}
\centering
  \begin{subfigure}[b]{0.12\textwidth}
  %\hspace{-0.2cm}
    \includegraphics[width=0.9\textwidth]{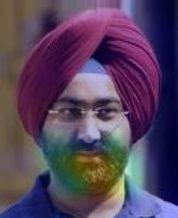}
    \caption{goatee}
    \label{fig:goatee}
  \end{subfigure}
  %\hfill
  \begin{subfigure}[b]{0.12\textwidth}
  %\hspace{0.3cm}
    \includegraphics[width=0.9\textwidth]{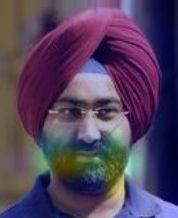}
    \caption{mustache}
    \label{fig:mustache}
  \end{subfigure}
  \begin{subfigure}[b]{0.12\textwidth}
  %\hspace{0.3cm}
    \includegraphics[width=0.9\textwidth]{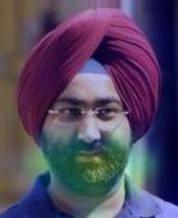}
    \caption{no beard}
    \label{fig:no beard}
  \end{subfigure}
   \vspace{-2mm}
  \caption{Saliency map generated by SRDML for beard  tasks.}
  \label{fig:beard}
  \vspace{-4mm}
\end{figure}

We also conduct a set of qualitative analysis experiment on the saliency map generated by our proposed SRDML on similar or related tasks. As can be seen in Figure~\ref{fig:hair} and Figure~\ref{fig:beard}, our proposed SRDML can generate saliency map focusing on similar regions for related tasks. For example, the saliency map generated for "Black hair" and "Brown hair" both generally overlap around the hair region of the woman, and the saliency map generated for three types of beard all overlap around the mouth and beard region of the man. Notice that the quality of saliency itself is not the main focus of this paper, but instead we are more interested in the task relation induced by the saliency map similarity (i.e., saliency across tasks).

\section{Additional Remarks on Theorem~\ref{thm:error bound}}

In this section, we provide more remarks on our main theorem, namely Theorem~\ref{thm:error bound}, for better understanding.

\noindent\textbf{Remark 1.} The equation above bounds the gap between the test error of the model trained from finite samples and that trained from infinite data, namely the theoretically optimal model/solution. In other words, Theorem 2 provides theoretical guarantee for our performance on actual test error.

\noindent\textbf{Remark 2.} In Eq.~\ref{eq:objective funciton with constraint with relaxation}-~\ref{eq:objective funciton 3}, we assume all tasks share the same set of X which is a very common case in Multi-task Learning on image dataset. Theorem 2 does not need different tasks to have different X(t), since $\mu_1 = \mu_2 = \cdots = \mu_T = \mu$ is a special case of the version in Theorem~\ref{thm:error bound}. Our current assumption is actually a more general one and can handle the case in Eq.~\ref{eq:objective funciton with constraint with relaxation}-~\ref{eq:objective funciton 3}.

\end{document}